\documentclass{article}

\usepackage{microtype}
\usepackage{graphicx}
\usepackage{subcaption}
\usepackage{enumitem}
\usepackage{booktabs} %
\usepackage[toc,page,header]{appendix}
\usepackage{babel,makecell,adjustbox,multirow}

\usepackage{listings}
\usepackage[dvipsnames]{xcolor}
\usepackage[normalem]{ulem}

\definecolor{codegray}{rgb}{0.5,0.5,0.5}
\definecolor{backcolour}{rgb}{0.96,0.96,0.96}

\usepackage{hyperref}

\PassOptionsToPackage{numbers, compress}{natbib}

\usepackage{amsmath}
\usepackage{amssymb}
\usepackage{mathtools}
\usepackage{amsthm}

\usepackage{bm}
\usepackage{tikz}
\usetikzlibrary{arrows.meta,positioning,calc,shapes.geometric}

\usepackage[accepted]{icml2026}

\usepackage[capitalize,noabbrev]{cleveref}

\theoremstyle{plain}
\newtheorem{theorem}{Theorem}[section]
\newtheorem{proposition}[theorem]{Proposition}

\theoremstyle{definition}

\theoremstyle{remark}

\newcommand{\bx}{\bm{x}}
\newcommand{\br}{\bm{r}}

\newcommand{\bmu}{\bm{\mu}}

\newcommand{\bv}{\bm{v}}

\newcommand{\valstd}[2]{$#1$ {\tiny $\pm #2$}}
\newcommand{\ourmethod}{ECD}

\long\def\edited#1{#1}
\newcommand{\makeblue}{}

\providecommand{\doparttoc}{}
\providecommand{\faketableofcontents}{}

\usepackage[textsize=tiny]{todonotes}

\icmltitlerunning{Energy-based Compositional Diffusion Planning}

\begin{document}

\doparttoc %
\faketableofcontents %

\twocolumn[
  \icmltitle{Energy-based Compositional Diffusion Planning}

  \icmlsetsymbol{equal}{*}

  \begin{icmlauthorlist}
    \icmlauthor{Tao Sun}{stanford}
    \icmlauthor{Utkarsh A. Mishra}{gatech}
    \icmlauthor{Jiaxin Lu}{austin}
    \icmlauthor{Danfei Xu}{gatech}
    \icmlauthor{Iro Armeni}{stanford}
  \end{icmlauthorlist}

  \icmlaffiliation{stanford}{Stanford University, California, USA}
   \icmlaffiliation{gatech}{Georgia Institute of Technology, Georgia, USA}
    \icmlaffiliation{austin}{University of Texas at Austin, Texas, USA}
  \icmlcorrespondingauthor{Tao Sun}{taosun@stanford.edu}

  \icmlkeywords{Machine Learning, ICML}

  \vskip 0.3in
]

\printAffiliationsAndNotice{}  %

\begin{abstract}
\edited{Compositional diffusion planners aim to solve long-horizon robotic tasks using short training trajectories.}
Yet, current approaches often rely on the heuristic stitching of local predictions. 
\edited{We show that the resulting stitched update is generally a non-conservative field} that does not mathematically
correspond to any valid global trajectory log-density function.
We propose \textbf{Energy-based Compositional Diffuser} (\ourmethod), a framework that formulates the global trajectory as the minimizer of the sum of local bridge potentials. 
\edited{This energy-based perspective defines a conservative correction field and contains a \emph{boundary reaction} term that heuristic stitching omits.}
To enable efficient inference, we further introduce a Markov-based score approximation that computes the reaction term \edited{via} a single block-tridiagonal solve, maintaining time complexity linear in the planning horizon. Empirically, \ourmethod{} achieves state-of-the-art success rates on a range of OGBench stitching tasks, while nearly matching the inference speed of heuristic stitching methods. \edited{Code is available at  \url{https://github.com/GradientSpaces/ECD}.}

\end{abstract}

\section{Introduction}
\label{sec:intro}

Diffusion models have become a robust paradigm for offline decision-making and robotic planning
\citep{janner2022diffuser,ajay2023decisiondiffuser,chi2023diffusionpolicy}. 
By abstracting planning as a sampling process, these models offer distinct advantages over classical planners: they naturally handle multimodality, allow flexible conditioning on goals and constraints, and integrate seamlessly with guidance-style control signals to satisfy multiple objectives at inference time \citep{dhariwal2021diffusionbeatgans,ho2022classifierfree}.

\begin{figure}[t]
    \centering
    \vspace{-0.1em}
\includegraphics[width=1\linewidth]{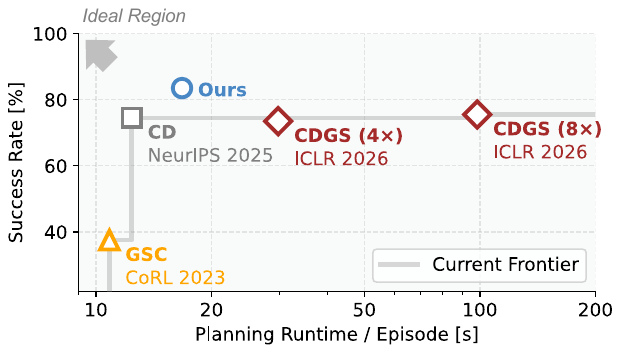}
    \caption{\textbf{Performance-runtime frontier on OGBench stitch tasks} \citep{ogbench_park2024}.
    We plot the trade-off between success rate and inference runtime averaged over PointMaze Giant and AntMaze Giant environments. While recent inference-time search methods like CDGS (4 or 8 resampling rounds) achieve high success, they suffer from significantly longer runtime costs. Conversely, heuristic stitching methods, \textit{e.g}., CompDiffuser (CD), are fast but prone to lower success. \textbf{Ours} bridges this gap, achieving state-of-the-art performance within reasonable runtime. }
    \label{fig:teaser}
    \vspace{-0.3em}
\end{figure}

Despite this promise, deploying diffusion planners in realistic long-horizon settings remains challenging. While an ideal training set would contain demonstrations spanning complete task horizons so that the model can be queried with boundary constraints (\textit{e.g}., start and goal states) \citep{janner2022diffuser}, real robot datasets are typically \textit{fragmented} into short clips due to skill segmentation~\cite{kaelbling2011hierarchical, garrett2021integrated, mishra2023generative} or task-agnostic play \citep{lynch2020learning, agia2022stap, ogbench_park2024}. This creates a fundamental mismatch: we have abundant local motion statistics but few full episodes that cover the downstream task. Hence, the planner is forced to generate long trajectories of length $L$ using only short-horizon experience of length $l \ll L$, a form of \textbf{compositional generation} in which short trajectory “chunks” must be composed into a globally consistent and feasible plan \citep{ajay2023compositional,yang2023compositional}.

\begin{figure*}[!t]
    \centering
    \includegraphics[width=1\linewidth]{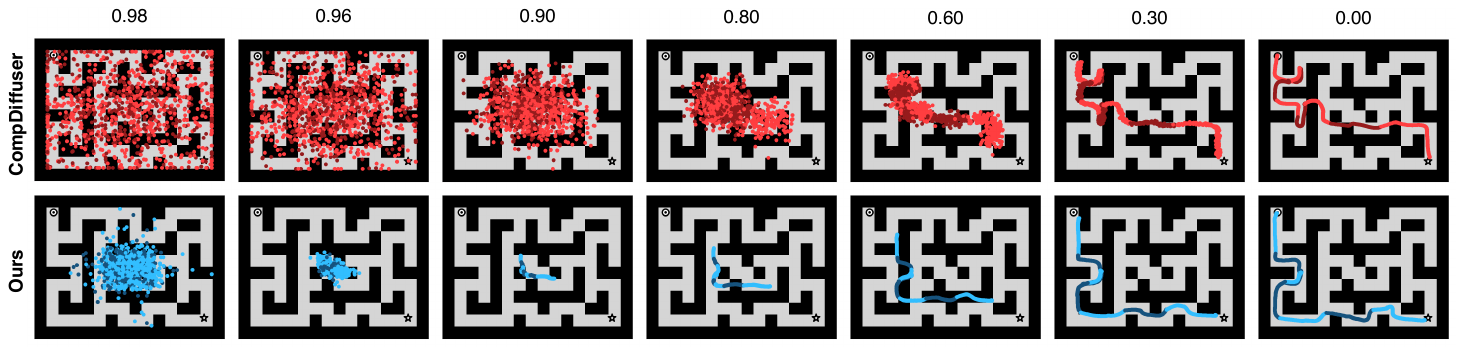}
    \caption{\textbf{Comparison of the denoising process.} Snapshots of the 2D trajectory on one OGBench AntMaze Giant sample during reverse diffusion process (timesteps are normalized to $1 \rightarrow 0$). While CompDiffuser (CD, top) exhibits scattered samples until the later steps, our method (bottom) rapidly converges to one possible global mode at earlier time ($t\approx0.8$ for ours and $t \approx 0.3$ for CD) and maintains this mode throughout the denoising process. The alternating darker colors denote different chunks. (\textit{Best viewed in color}.)}
    \label{fig:diff}
    \vspace{-0.3em}
\end{figure*}

To address compositional generation and create one long-horizon trajectory, recent methods rely on the stitching of the short trajectory chunks' predictions during the denoising process of diffusion sampling. Such approaches have employed simple mode-averaging~\cite{mishra2023generative, mishra2024generative} or \edited{boundary-conditioned inpainting} to generate local chunks, which are then stitched using \edited{simple stitching heuristics} \citep{lidiffstitch2024,luo2025compdiffuser} or hierarchical refinement \citep{chen2024hierarchicaldiffuser,chen2025hmdiffusion}. 
While effective for simple cases, we argue that these methods operate on a flawed inductive bias: they assume global consistency can be achieved by heuristically averaging or overwriting local overlaps, thereby mishandling the underlying mathematics of composition of diffusion scores.
A recent work, CDGS \citep{mishra2025compositional}, attempts to resolve such inconsistencies at inference time through resampling and scaling inference-time compute. While CDGS improves success rates over heuristic stitching methods, \textit{e.g}., CompDiffuser~(CD)~\citep{luo2025compdiffuser}, its iterative nature imposes substantial computational overhead. As illustrated in Fig.~\ref{fig:teaser}, a significant gap remains in the literature landscape: existing methods are either fast but inconsistent, or consistent but prohibitively slow.

In this paper, we identify a fundamental failure mode in existing stitching approaches. The issue is not \edited{using boundary conditions for local chunks}, but rather how overlapping updates are aggregated. Stitching heuristics, such as averaging or overwriting, \edited{ignore how the chunk's interior mean shifts in response to boundary-condition perturbations}. Mathematically, this omission drops a necessary chain-rule term: the gradient component that adjusts \edited{boundary variables based on ``feedback''} from the chunk's interior points. By ignoring this interaction, existing methods induce a \textbf{non-conservative} score field (\textit{i.e.}, with non-zero curl). Because this field cannot be expressed as the gradient of any global log-density function, the usual score-based reverse-diffusion interpretation no longer applies. Empirically, this results in slower convergence and/or a failure to settle into one possible trajectory mode, as shown in Fig.~\ref{fig:diff}.
Conversely, restoring conservativity \edited{of the induced compositional energy} provides a principled direction for mode-consistent denoising, thereby potentially reducing the need for heavy resampling required to enforce mode consistency, as in CDGS.

Guided by the above findings, we propose \textbf{an energy-based compositional diffuser}~(\ourmethod).
Our key insight is to invert the problem:
instead of composing predicted trajectories from local denoisers, we first construct a global, scalar-valued energy by summing local  energies induced by each chunk denoiser, and then \edited{use its negative gradient as a conservative correction direction.}
\edited{Because the correction} is obtained by differentiating a scalar-valued energy, the energy-derived field is conservative by construction.
This energy perspective also reveals what stitching approaches miss, a \textbf{\edited{boundary reaction}} term (a Jacobian-vector product) that propagates interior inconsistency back to chunk boundaries as boundary messages.
Finally, we show how to approximate this reaction efficiently with a single block-tridiagonal solve per chunk, giving per-denoising-step complexity linear in the planning horizon.

We summarize our main contributions:
\begin{itemize}[leftmargin=1em]
\itemsep0.2em
\item We identify a theoretical failure mode of heuristic stitching diffusion planners: the induced update field is generally non-conservative, and thus cannot correspond to the score of any global trajectory density.
\item We propose an energy-based formulation for compositional diffusion planners whose \edited{energy-derived correction field} is conservative by construction. With this formulation, we reveal a crucial \edited{boundary} reaction term that propagates interior inconsistency to chunk boundaries as messages.
\item We introduce a Markov-based approximation that computes this reaction efficiently via a block-tridiagonal solve, keeping inference time close to heuristic stitching approaches.
\end{itemize}
\vspace{-0.1em}
\begin{figure*}[t]
    \centering
    \includegraphics[width=1.0\linewidth]{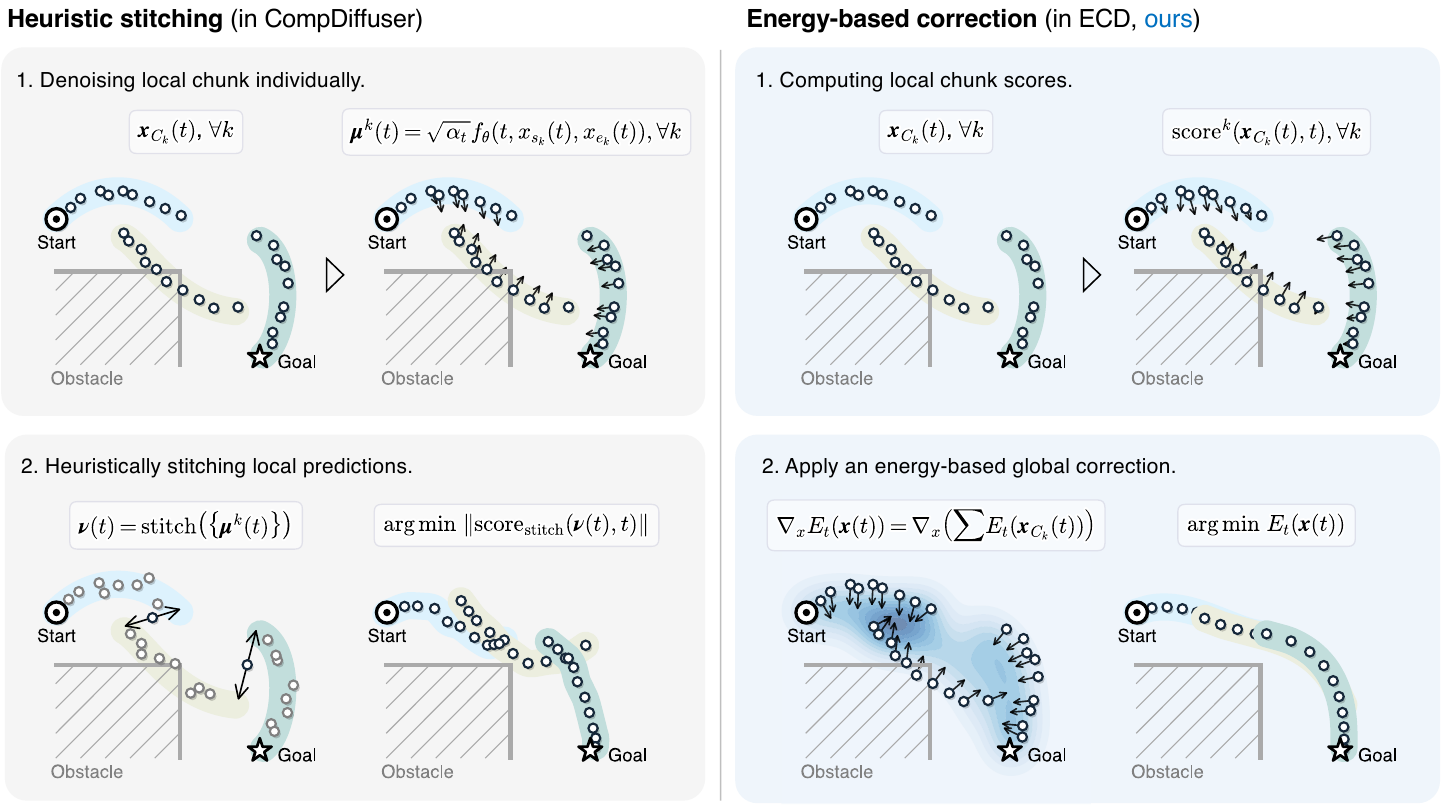}
    \vspace{-0.9em}
    \caption{\textbf{Method comparison.} At diffusion time $t$, the global trajectory $\bm{x}(t)$ is covered by overlapping chunks (colored bands) connecting a fixed start and goal around obstacles.
    \edited{For chunk $k$, $\bx_{s_k}(t)$ and $\bx_{e_k}(t)$ denote the boundary conditions given to the local denoiser (\textit{i.e.}, start/goal states or segments overlapped with neighboring chunks). \textbf{Left:} CompDiffuser denoises chunks independently, producing local noisy-mean predictions $\bmu^k(t)$, and then stitches them into a global mean $\bm{\nu}(t)$. However, the minimizer of this heuristically stitched score $\mathrm{score}_{\mathrm{stitch}}(\bm{\nu}(t),t)$ is generally not guaranteed to correspond to a globally consistent trajectory, due to its non-conservativity. \textbf{Right:} ECD constructs a scalar-valued global energy $E_t(\bx(t))=\sum_k E_t(\bx_{\mathcal C_k}(t))$ from local chunk energies and follows the negative energy gradient $-\nabla_{\bx}E_t(\bx(t))$. Therefore, ECD performs a conservative energy-based correction, and its denoising target is the global energy minimizer $\arg\min E_t(\bx(t))$. See Eq.~\ref{eq:stitch_score} for the stitched score definition.} (\textit{Best viewed in color.})}
    \label{fig:method_fig}
\end{figure*}

\section{Preliminaries}
\label{sec:prelim}

\subsection{Problem Setup}
\label{sec:problem_setup}

In a standard goal-conditioned planning setting, let $\mathcal{X} = \mathbb{R}^D$ denote the state space. Given a start state $x_s$ and a goal $x_g$, the objective is to generate a feasible trajectory $x_{0:L} \in \mathbb{R}^{(L+1) \times D}$ such that $x_0 = x_s$ and $x_L = x_g$. Following prior work \citep{mishra2023generative,luo2025compdiffuser}, we formulate planning as conditional generative modeling:
\begin{equation}
    \bx_{1:L-1} \sim p_\theta(\cdot \mid x_0 = x_s, x_L = x_g).
\end{equation}
Although we describe the method over states for clarity, the formulation extends directly to state-action sequences~\cite{janner2022diffuser}. In practice, diffusion planners often generate state sequences only and recover actions using a separate inverse-dynamics model that predicts the action given consecutive states~\citep{ajay2022conditional, luo2025compdiffuser}. Accordingly, we focus on state-sequence planning throughout the paper.

In this paper, we study the \emph{out-of-span} long-horizon setting where training clips are much shorter than the downstream planning horizon.
Specifically, the dataset $\mathcal{D}$ consists only of fragments of fixed length $l$,
$\mathcal{D} := \{ x^{(i)}_{s_i:s_i+l} \}_{i=1}^N$, where $l \leq L$. 
This reflects common real-robot pipelines where skill separation, resets, safety interventions, or teleoperation yield short segments rather than complete task episodes.
The core challenge is to learn local dynamics from the fragments in $\mathcal{D}$ and compose them consistently to solve horizon-$L$ tasks during inference.

\begin{figure*}[t]
    \centering
    \includegraphics[width=1\linewidth]{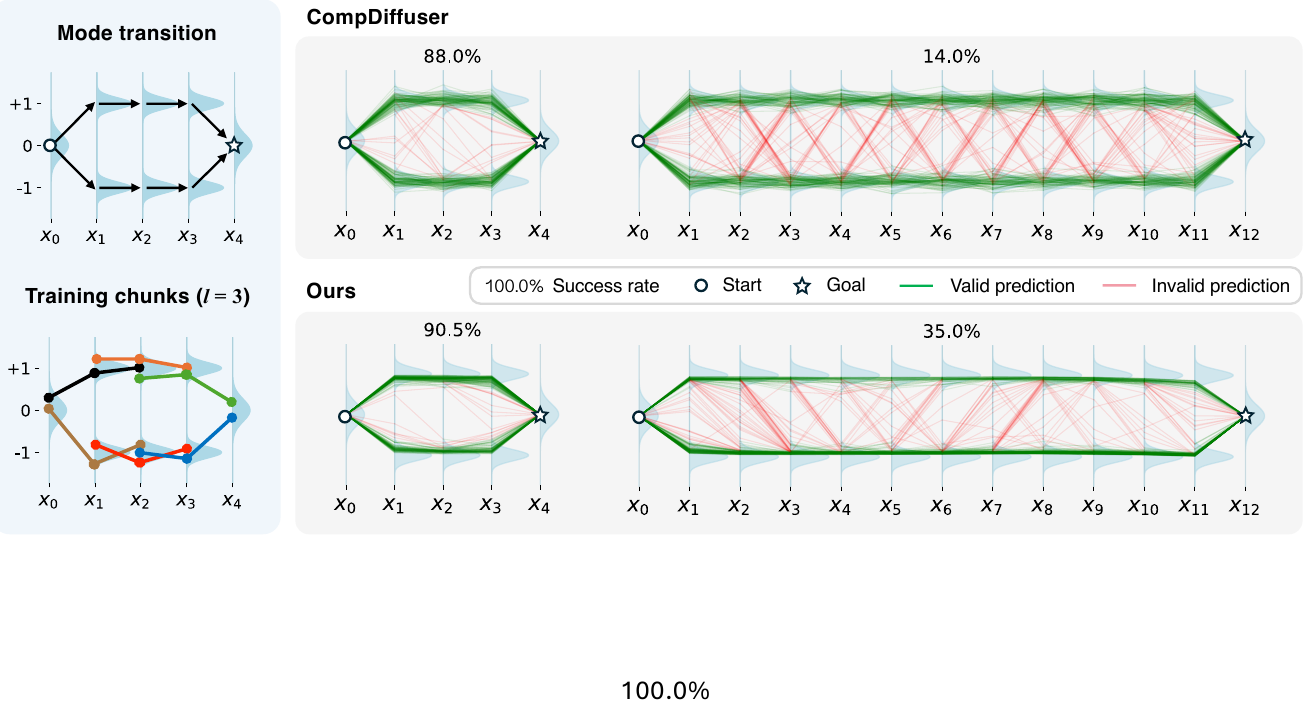}
    \caption{\textbf{Toy example with a sequence of 1D distributions}: For scalar variables $\{x_{0:L}\}$, there are two feasible long-horizon plans from start $x_0 = 0$ to goal $x_L = 0$: one through the top modes ($\mathbb{E}[x_i]=+1$) and one through the bottom modes ($\mathbb{E}[x_i]=-1$). Each training chunk contains $l=3$ consecutive variables. While both CD and our method show robust performance at $L=4$, CD's performance degrades significantly at $L=12$. Our method enforces better consistency in overlapping regions via an \edited{energy-derived conservative} score field, enabling improved long-horizon continuity while keeping individual segments in-distribution. In contrast, CD's stitching strategy leads to inconsistent jumps between modes. In gray blocks, each subfigure displays 200 runs for each method; \textcolor{OliveGreen}{{green}} lines denote valid trajectories, while \textcolor{Salmon}{{coral}} lines indicate invalid ones. The number above each subfigure is the success rate. (\textit{Best viewed in color}.)}
    \label{fig:toy}
    \vspace{-0.2em}
\end{figure*}

\subsection{Compositional Diffusion Planning}

CompDiffuser (CD) \citep{luo2025compdiffuser} first addresses this setting by combining a local conditional diffusion model with heuristic prediction stitching.

\textbf{Local chunk denoiser.} 
Following CD's convention, the task horizon $L$ is covered by $K+1$ overlapping chunks with length $l$ and stride $d$, where $l/2 < d < l$. 
For each chunk $k$, its start and end indices are $s_k := k d$ and $e_k := kd + l$, respectively. Without loss of generality, we assume $L=Kd+l$. We denote the $k$-th chunk index set as
$
C_k := \{s_k, \cdots, e_k\}  = \{kd, \cdots, kd+l\}.
$

Let $\mathbf{S}_k$ select the states in chunk $C_k$. We further introduce two local selectors: $\mathbf{P}_k$ selects the coordinates whose noisy mean is predicted or scored, and $\mathbf{O}_k$ selects the variables used as the boundary condition. Throughout the paper, a boundary condition can refer to the chunk's start/end states or to short boundary segments shared with neighboring chunks.

\edited{At diffusion timestep $t$, the denoiser estimates a clean local prediction for the selected coordinates. The network may receive the current noisy selected coordinates as an input, but the bridge-energy correction differentiates the prediction only through the boundary-condition channel. Equivalently, the selected-coordinate input is treated as part of the CD proposal, or as stop-gradient, when defining the correction energy. We write the corresponding noisy mean as}
\edited{\begin{equation}
\begin{aligned}
\bmu^k(t) & := \bmu^k\!\big(\mathbf{c}_k(t),t\big)\\
& := \sqrt{\bar\alpha_t}\,\hat\bx^k_{0,\theta}\!\big(t,\operatorname{sg}[\mathbf{P}_k\mathbf{S}_k\bx(t)]\mid \mathbf{c}_k(t)\big),
\label{eq:noisy-mean}
\end{aligned}
\end{equation}}
\edited{Here, 
$\mathbf{c}_k(t):=\mathbf{O}_k\mathbf{S}_k\bx(t)$ is the boundary condition,  $\bar\alpha_t$ is the cumulative diffusion drift and $\operatorname{sg}[\cdot]$ denotes stop-gradient for the energy correction. The crucial point is that the local mean $\bmu^k$ is differentiated as a function of the boundary condition. Therefore, when the boundary condition changes during reverse diffusion, the entire local prediction can shift.}

\paragraph{Inference via stitching of local chunks.}

During reverse diffusion, each local expert generates a noisy-mean prediction $\bmu^k(t)$ over its window $C_k$. Because indices in the overlapping regions $j \in C_k \cap C_{k+1}$ receive conflicting candidates from adjacent experts, CD resolves these conflicts via a heuristic {override} or {averaging} rule.

Let $\nu(t)\in\mathbb{R}^{(L+1)\times D}$ be the global stitched mean. Under stride assumption, the value at an overlap index $j \in C_k \cap C_{k+1}$ is composed as:
\begin{equation}
    \nu_j(t) = w_k \mu^{k}_j(t) +  (1-w_k)\mu^{k+1}_j(t).
    \label{eq:stitching}
\end{equation}
\edited{The corresponding stitched noisy-mean score estimate is}
\edited{\begin{equation}
    \mathrm{score}_{\mathrm{stitch}}(\bx,t)
    := \frac{\bm{\nu}(t)-\bx(t)}{\sigma_t^2}.
    \label{eq:stitch_score}
\end{equation}}
In CD's autoregressive (AR) variant,\footnote{CD offers both AR and interleaved variants in which chunks are denoised separately and \edited{boundary conditions} for chunk $k$ are drawn from neighboring chunks. Our analysis applies whenever the local mean depends on the boundary condition and the stitching rule omits the chain-rule derivative. Details are in Appendix~\ref{app:cd_nonconservative_full}.} \edited{the shared region is determined entirely by one direction}. For \texttt{ar\_forward}, $w_k=0$ (the later chunk overwrites the earlier); for \texttt{ar\_backward}, $w_k=1$ (the earlier chunk overwrites the later).
This stitched mean $\nu(t)$ is then used to compute the noise estimate $\hat{\epsilon}(t)$ for updating the global state.

\edited{Such heuristic composition is fast, but it uses only the local predictions themselves. It does not differentiate how a chunk prediction changes with respect to the boundary condition $\mathbf{O}_k\mathbf{S}_k\bx(t)$. The missing derivative is the boundary reaction term derived in the next section.}

\section{Energy-based Compositional Diffuser}

We approach compositional planning by \textit{inverting} the standard paradigm: rather than stitching local predictions to approximate a global trajectory, we construct a global energy landscape from local chunk denoisers and define the plan as one of the energy minimizers.
\edited{Because the correction direction is the negative gradient of a scalar field, the resulting energy-derived vector field is \emph{conservative} by construction \citep{marsden2003vector}. In the practical sampler, this field is used as a conservative compatibility \textbf{correction} to a CD-style interleaved sampler.}

\subsection{Properties of a Consistent Global Energy} \label{sec:energy_properties}

To build intuition, consider the 1D toy example in Fig.~\ref{fig:toy}. The task is to plan a trajectory $x_{0:L}$ from $x_0=0$ to $x_L=0$. The underlying data distribution is multimodal, favoring trajectories that stay entirely at $+1$ (top path) or $-1$ (bottom path). A valid trajectory must commit to one mode. However, an inpainting denoiser for \edited{a} short chunk ($l=3$) sees valid training data in both modes. The core challenge is ensuring that a decision to choose \edited{a} mode in one chunk propagates globally to align all other chunks, avoiding the ``zig-zag'' failure where the trajectory incoherently jumps between $+1$ and $-1$, denoted as pink lines in Fig.~\ref{fig:toy}.

To solve the mode-consistency illustrated above, a global energy function $E(\bx)$ should satisfy \edited{the} following properties:

\begin{enumerate}[leftmargin=1.2em]
\itemsep0.2em 
\item \textbf{Interior update.} For any given chunk, if the \edited{boundary variables} are fixed (\textit{e.g}., at mode $+1$), the energy minimum for the interior nodes must lie on the feasible manifold connecting them. This ensures local kinematic validity and avoids collision.
\item \textbf{\edited{Reaction to boundary conditions.}} \edited{The energy must couple the interior to the boundary conditions. If the interior is high-energy (\textit{e.g.}, in the middle of two modes), it must exert a ``reaction force'' on the boundary conditions to move them toward feasible modes.}
\item \textbf{Chunk consensus.} Since chunks overlap, the global energy needs to be additive, forcing the overlapping nodes to satisfy the constraints of both the left and right overlapping simultaneously.
\end{enumerate}

We illustrate the above three properties in Fig.~\ref{fig:energy_prop} with the desired gradient direction of each property.
In addition to these, the energy should have consistent {noise scaling} throughout the diffusion time steps, so its contribution has a magnitude compatible with DDIM at different times.

\begin{figure}[t]
    \centering
    \includegraphics[width=1.005\linewidth]{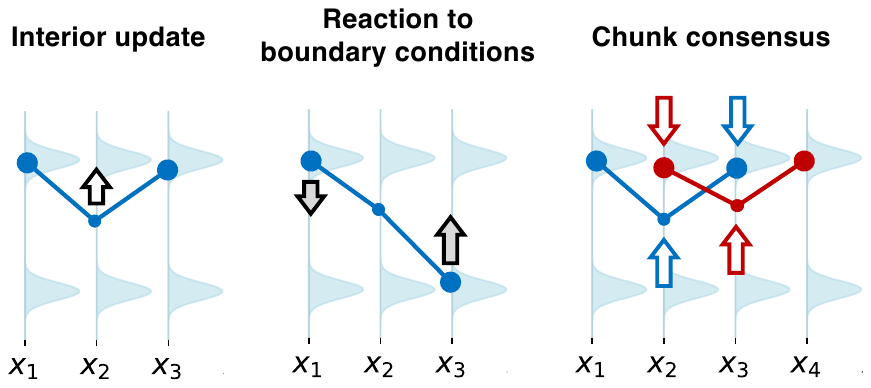}
    \caption{\textbf{Physical interpretation of three properties.} The arrows indicate the desired gradient direction leading to energy minimization for each property. \edited{Here, small dots denote interior variables and large dots denote boundary variables.} (\textit{Best viewed in color}.)}
    \label{fig:energy_prop}
    \vspace{-0.3em}
\end{figure}

\subsection{Energy-based \edited{Correction}}

\label{sec:model}
\edited{We realize these properties by lifting the conditional predictions of the trained denoiser into an energy-based correction.}
Recall chunk \(k\) covers indices \(C_k=\{s_k,\dots,e_k\}\) and we denote \(\bx_k := \mathbf{S}_k\bx\in\mathbb{R}^{(l+1)D}\).
\edited{Let $\mathbf{P}_k\bx_k$ denote the local coordinates that we score, and let $\mathbf{O}_k\bx_k$ denote the local boundary condition. The selector $\mathbf{P}_k$ selects the local trajectory coordinates being corrected, while $\mathbf{O}_k$ selects the boundary variables that condition the chunk prediction.}
\edited{Following the stop-gradient convention in Eq.~\ref{eq:noisy-mean}, at diffusion step $t$ the trained denoiser provides a conditional mean prediction for the selected local coordinates:}
\edited{\begin{equation}
\bmu^k(t) := \bmu^k(\mathbf{O}_k \bx_k(t),t)\in\mathbb{R}^{d_k},
\quad d_k:=\dim(\mathbf{P}_k \bx_k).
\end{equation}}
Here, all derivatives of $\bmu^k$ below are with respect to the boundary condition $\mathbf{O}_k \bx_k$, and the denoiser's direct noisy-input argument is held fixed in this bridge-energy interpretation.

\paragraph{Local energy.} We define the local energy of a single chunk as the squared Mahalanobis distance between the current selected local state and this predicted mean:
\edited{\begin{equation}
E_{\text{bridge}}^{k}(\bx,t)
\;:=\;
\frac{1}{2\sigma_t^2}
\left\|
\mathbf{P}_k\bx_k - \bmu^k\!\big(\mathbf{O}_k\bx_k,t\big)
\right\|^2_{W_k},
\label{eq:bridge_energy}
\end{equation}}
where \(\|\bv\|^2_{W_k} := \bv^T W_k \bv\), and $W_k$ is a diagonal weighting matrix handling overlap redundancy, \textit{e.g}., \(w_j^k = 1/|\{k': j\in C_{k'}\}|\).
\edited{In the context of Fig.~\ref{fig:toy}, if a chunk's boundary variables are at $+1$,} the expert prediction $\bmu^k$ will likely be a sequence of $+1$s. This energy term penalizes any deviation of the selected local nodes from this mode.

\paragraph{Global energy.}
We define the global energy as a sum of local bridge energies:
\begin{equation}
E_t(\bx) \;:=\; \sum_{k=0}^{K} E_{\text{bridge}}^{k}(\bx,t).
\label{eq:global_energy}
\end{equation}
\edited{The induced bridge-model score/correction direction is the negative gradient of this energy, \textit{i.e}., $\mathrm{score}(\bx,t) := -\nabla_{\bx} E_t(\bx)$.}

\paragraph{Property satisfaction.} To see how the three properties are enforced, we first analyze the gradient of a single chunk (Eq.~\ref{eq:reaction_score_clean}) and then the summation over chunks (Eq.~\ref{eq:energy_to_score}).
Let
\edited{\begin{equation*}
\br_k(t) \;:=\; W_k\Big(\mathbf{P}_k\bx_k(t) - \bmu^k\big(\mathbf{O}_k\bx_k(t),t\big)\Big)
\end{equation*}}
be the residual error of the selected local coordinates. 
The score contribution of chunk $k$ is
\edited{\begin{equation}
\begin{aligned}
& \mathrm{score}^k(\bx,t)
= -\nabla_{\bx}\,E_{\text{bridge}}^{k}(\bx,t)\\
& =
-\frac{1}{\sigma_t^2}\,\mathbf{S}_k^T \underbrace{\mathbf{P}_k^T \br_k(t)}_{\text{Interior update}}
\;+\;
\frac{1}{\sigma_t^2}\,\mathbf{S}_k^T  \underbrace{\mathbf{O}_k^T \mathbf{J}_{O,k}(t)^T \br_k(t)}_{\text{Boundary reaction}}.
\end{aligned}
\label{eq:reaction_score_clean}
\end{equation}}
Here, \(\mathbf{J}_{O,k}(t) := \frac{\partial\,\bmu^k(\mathbf{O}_k\bx_k,t)}{\partial\,\mathbf{O}_k\bx_k}\) is the Jacobian of the local predictor with respect to its \edited{boundary condition}. \edited{If one instead differentiates through the noisy selected-coordinate input of the neural denoiser, Eq.~\ref{eq:reaction_score_clean} gains an additional self-Jacobian factor on the interior term. ECD uses the boundary-only/stop-gradient bridge energy in Eq.~\ref{eq:bridge_energy}, so Eq.~\ref{eq:reaction_score_clean} is the exact gradient of the energy being minimized.}
Summing over chunks gives
\begin{equation}
\mathrm{score}(\bx,t) = \sum_{k=0}^{K} \mathrm{score}^k(\bx,t).
\label{eq:energy_to_score}
\end{equation}
\edited{By construction, the ECD correction field $\mathrm{score}(\bx,t)=-\nabla_{\bx}E_t(\bx)$ is conservative and integrable as the score of the induced bridge-energy compatibility density, proportional to $\exp(-E_t(\bx))$. In the full implementation, this field is applied as an additive correction to the CD proposal.}
The terms in Eq.~\ref{eq:reaction_score_clean}, combined with the summation in Eq.~\ref{eq:energy_to_score}, satisfy the physical requirements as:
\begin{itemize}[leftmargin=1em]
\itemsep0.5em
    \item The \textbf{interior update} term pulls the selected local coordinates toward the predicted mean $\bmu^k$ (Property 1).
    \item \edited{The \textbf{boundary reaction} term captures how the local prediction $\bmu^k$ changes as the boundary condition $\mathbf{O}_k\bx_k$ moves. This term propagates the residual from the predicted coordinates back to the boundary variables (Property 2).}
    \item The \textbf{chunk consensus} is enforced by the summation of these local scores in Eq.~\ref{eq:energy_to_score}. Because the global energy is additive, any state variable $x_j$ belonging to multiple overlapping chunks receives gradient updates from all of them simultaneously. This drives the trajectory toward a configuration that minimizes the total energy of all overlapping chunks (Property 3).
\end{itemize}

\subsection{Why Heuristic Stitching is Non-Conservative}
\label{sec:non-integrability}

We can now formally pinpoint why heuristic stitching (\textit{e.g.} CD) fails more frequently in the multimodal data in Fig.~\ref{fig:toy}.
When CD resolves a conflict between a chunk that predicts $+1$ at an index and a neighbor that predicts $-1$ at that index, it computes a weighted average or overwriting (Eq.~\ref{eq:stitching}). 
\edited{This corresponds to a bridge-only approximation that uses the residual term while effectively setting the reciprocal boundary-condition Jacobian \(\mathbf{J}_{O,k}(t)\) in Eq.~\ref{eq:reaction_score_clean} to zero.}

We formalize this in Proposition~\ref{prop:nonconservative} (proved in Appendix~\ref{app:cd_nonconservative_full}), which states formally why CD-style stitching results in a non-conservative vector field. \edited{Such a field cannot represent the gradient of any valid log-density, making the standard score-based interpretation invalid and helping explain the observed mode drift.}

\begin{proposition}[Non-conservative score field of chunk stitching]
\label{prop:nonconservative}
Assume there exists a chunk $k$ and an index $j$ for which the stitched mean uses $\mu^k_j$ with nonzero weight and $\mu^k_j(\bx,t)$ depends nontrivially on a chunk \edited{boundary condition}, i.e.,
\edited{\[
\frac{\partial \mu^k_j}{\partial (\mathbf{O}_k\bx_k)_q} \neq 0
\]}
for some coordinate $q$, at some $\bx$ and fixed $t$. \edited{Assume also that this dependence is not exactly canceled by other terms contributing to the same stitched coordinate.}
Further assume that stitching assigns that \edited{boundary-conditioning} coordinate using only chunk predictions that do \emph{not} depend on $x_j$.
Then, the Jacobian $\nabla_{\bx}\mathrm{score}_{\text{stitch}}(\bx,t)$ is asymmetric.
\end{proposition}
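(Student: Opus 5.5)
The plan is to compute two mirror entries of the Jacobian of $\mathrm{score}_{\text{stitch}}(\bx,t)=(\bm{\nu}(t)-\bx(t))/\sigma_t^2$ and show they differ. Since $\bx/\sigma_t^2$ contributes the symmetric matrix $-I/\sigma_t^2$, asymmetry of $\nabla_{\bx}\mathrm{score}_{\text{stitch}}$ is equivalent to asymmetry of $\nabla_{\bx}\bm{\nu}$. Let $p$ denote the global coordinate corresponding to the boundary-conditioning coordinate $(\mathbf{O}_k\bx_k)_q$, i.e.\ $(\mathbf{O}_k\mathbf{S}_k\bx)_q = x_p$. The pair of entries to compare is $\partial \nu_j/\partial x_p$ and $\partial \nu_p/\partial x_j$.

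\textbf{Step 1.} Expand $\nu_j$ by Eq.~\ref{eq:stitching}: $\nu_j=\sum_{k'} w_{k'}^{(j)}\mu^{k'}_j$, a weighted sum of the predictions of the chunks covering $j$. Under the stop-gradient convention of Eq.~\ref{eq:noisy-mean}, each $\mu^{k'}_j$ depends on $\bx$ only through its boundary condition $\mathbf{c}_{k'}=\mathbf{O}_{k'}\mathbf{S}_{k'}\bx$. The chain rule gives
\[
\frac{\partial \nu_j}{\partial x_p}=\sum_{k'} w_{k'}^{(j)}\sum_{q'}\frac{\partial \mu^{k'}_j}{\partial (\mathbf{c}_{k'})_{q'}}\,\frac{\partial(\mathbf{c}_{k'})_{q'}}{\partial x_p}.
\]
The $k'=k$ summand contains $w_k^{(j)}\,\partial\mu^k_j/\partial(\mathbf{O}_k\bx_k)_q\neq 0$ by hypothesis. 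The non-cancellation assumption then gives $\partial\nu_j/\partial x_p\neq0$. Note that $j\neq p$, since $j$ is a scored coordinate and $p$ a conditioning one, so this is an off-diagonal entry.

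\textbf{Step 2.} Treat the reverse entry. By assumption, $\nu_p$ is assigned as a stitched combination of chunk predictions $\mu^{k''}_p$, none of which depends on $x_j$. Under the stop-gradient convention this means $x_j$ does not appear in any of their boundary conditions, and the direct noisy-input channel is frozen. Hence $\partial\nu_p/\partial x_j=0$.

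\textbf{Step 3.} Conclude. We have $(\nabla_{\bx}\mathrm{score}_{\text{stitch}})_{jp}\neq 0=(\nabla_{\bx}\mathrm{score}_{\text{stitch}})_{pj}$, so the Jacobian is asymmetric. If one also wants the geometric statement, the Poincaré lemma on the simply connected domain $\mathbb{R}^{(L+1)D}$ then shows the field is not a gradient of any scalar, hence non-conservative.

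The main obstacle is bookkeeping rather than analysis. First, the index conventions must be made precise: the local coordinate $q$ has to be mapped to the global index $p$, and it must be checked that $p\neq j$. Second, the non-cancellation hypothesis has to be phrased exactly as $\partial\nu_j/\partial x_p\neq0$; in the AR case with $w_k\in\{0,1\}$ only one chunk contributes, so this is automatic. Third, Step 2 must be justified carefully. If the implementation feeds the noisy coordinate $x_p$ itself into the denoiser, one might worry about a $\partial\nu_p/\partial x_j$ contribution from that channel. I would handle this by invoking the stop-gradient convention explicitly, and noting that in CD's interleaved and AR variants, boundary values are drawn from neighboring chunks' outputs conditioned away from $x_j$. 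Details of those variants are deferred to the appendix.
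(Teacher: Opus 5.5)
Your proposal is correct and is essentially the paper's proof: the $-I/\sigma_t^2$ term is symmetric, the nonzero weight together with non-cancellation gives $\partial\nu_j/\partial x_p\neq 0$, and the one-way assignment hypothesis gives $\partial\nu_p/\partial x_j=0$, so the mixed partials differ. Two small fixes: first, you should deduce $j\neq p$ from your Steps 1--2 themselves (if $j=p$ they would make $\partial\nu_j/\partial x_j$ both nonzero and zero), not from scored-versus-conditioning roles, since overlap coordinates can be both scored and used as boundary conditions; second, the non-gradient conclusion rests on the symmetry of second derivatives of a $C^2$ potential (Schwarz), not on the Poincaré lemma, which is the converse direction and is the only place simple connectivity would matter.
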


\subsection{Efficient Approximation via Markovian Structure}
\label{sec:efficient_score}
Calculating the exact Jacobian \(\mathbf{J}_{O,k}(t)\) in Eq.~\ref{eq:reaction_score_clean} requires differentiating through the neural network at each denoising step, which is computationally prohibitive.
However, in robotic planning tasks, the dependency structure is often \edited{locally Markovian, where} $x_i$ depends primarily on $x_{i-1}$ and $x_{i+1}$.
We exploit this to approximate \(\mathbf{J}_{O,k}^T\br_k\) by solving a sparse, block-tridiagonal linear system.
This reduces the complexity of computing the reaction term from quadratic to linear in the horizon length, $O(L)$, enabling the efficient inference shown in Algorithm~\ref{alg:impl_ecd_ddim}.
Details are in Appendix~\ref{app:approx_close_exact}.

\begin{table}[t]
\centering
\adjustbox{scale=1.0,center}
{
\footnotesize
\setlength{\tabcolsep}{4pt}

}

\vspace{4pt}
\adjustbox{scale=0.97,center}
{
\footnotesize
\begin{tabular}{ll ccc}
\toprule
\multirow{2}{*}{\textbf{Type}} & \multirow{2}{*}{\textbf{Method}} & \multicolumn{3}{c}{\textbf{Success Rate [\%]}} \\
\cmidrule(lr){3-5}
&  & \texttt{Medium} & \texttt{Large} & \texttt{Giant} \\
\midrule
\multirow{6}{*}{\textit{Offline RL}} & GCBC & \valstd{23}{18} & \valstd{7}{5} & \valstd{0}{0} \\
& GCIVL & \valstd{70}{14} & \valstd{12}{6} & \valstd{0}{0} \\
& GCIQL & \valstd{21}{9} & \valstd{31}{2} & \valstd{0}{0} \\
& CRL & \valstd{0}{1} & \valstd{0}{0} & \valstd{0}{0} \\
& QRL & \valstd{80}{12} & \valstd{84}{15} & \valstd{50}{8} \\
& HIQL & \valstd{74}{6} & \valstd{13}{6} & \valstd{0}{0} \\
\midrule
\multirow{4}{*}{\textit{Diffusion}} & GSC & \valstd{\mathbf{100}}{0} & \valstd{\mathbf{100}}{0} & \valstd{39}{1} \\
& {CompDiffuser} & \valstd{\mathbf{100}}{0} & \valstd{\mathbf{100}}{0} & \valstd{{77}}{3} \\
& {CDGS} & \valstd{\mathbf{100}}{0} & \valstd{97}{2} & \valstd{69}{4} \\
& \textbf{Ours} & \valstd{\mathbf{100}}{0} & \valstd{\mathbf{100}}{0} & \valstd{\textbf{84}}{2} \\
\bottomrule
\end{tabular}
}
\vspace{0.5em}
\caption{
\textbf{Quantitative results on PointMaze-stitching datasets in \citet{ghugare2024closing}.} 
We compare \ourmethod{} to baselines of multiple categories, including diffusion, data augmentation, and offline reinforcement learning in \citet{ghugare2024closing}, GSC~\citet{mishra2023generative}, CompDiffuser~\citet{luo2025compdiffuser} and CDGS~\citet{mishra2025compositional}.
Results are averaged  over 3 seeds and standard deviations are shown after the $\pm$ sign.
}
\label{tab:pointmaze_ben_ogb}
\vspace{-1.2em}
\end{table}

\begin{table*}[t!]
\centering
\adjustbox{scale=1.0,center}
{
\footnotesize
\setlength{\tabcolsep}{5pt}
\begin{tabular}{ccc ccc ccc cccc}
\toprule
\textbf{Environment} & \textbf{Type} & \textbf{Size} & \textbf{GCBC} & \textbf{GCIVL} & \textbf{GCIQL} & \textbf{QRL} & \textbf{CRL} & \textbf{HIQL} & \textbf{GSC} & \textbf{CD} & \textbf{CDGS} & \textbf{Ours} \\
\midrule

\multirow[c]{3}{*}{\textbf{AntMaze}}

 & \multirow[c]{3}{*}{\texttt{{Stitch}}} 

 & \texttt{Medium} & $45$ {\tiny $\pm 11$} & $44$ {\tiny $\pm 6$} & $29$ {\tiny $\pm 6$} & $59$ {\tiny $\pm 7$} & $53$ {\tiny $\pm 6$} & ${94}$ {\tiny $\pm 1$} 
 & \valstd{ \textbf{98} } { 1 }
 & \valstd{ {95} } { 4 } & \valstd{ {96} } { 1 }  & \valstd{ \underline{97} } { 1 }  \\

 &  & \texttt{Large} & $3$ {\tiny $\pm 3$} & $18$ {\tiny $\pm 2$} & $7$ {\tiny $\pm 2$} & $18$ {\tiny $\pm 2$} & $11$ {\tiny $\pm 2$} & ${67}$ {\tiny $\pm 5$} 
 & \valstd{ \underline{77} } { 4 }
 & \valstd{ {74} } { 4 } & \valstd{ {71} } { 3 } & \valstd{ \textbf{82} } { 1 } \\

 &  & \texttt{Giant} & $0$ {\tiny $\pm 0$} & $0$ {\tiny $\pm 0$} & $0$ {\tiny $\pm 0$} & $0$ {\tiny $\pm 0$} & $0$ {\tiny $\pm 0$} & ${21}$ {\tiny $\pm 2$} 
 & \valstd{ {36} } { 1 }
& \valstd{ {72} } { 9 } & \valstd{ \underline{78} } { 7 } & \valstd{ \textbf{82} } { 6 }\\

\cmidrule{1-13}

\multirow[c]{3}{*}{\makecell{\textbf{Humanoid-}\\\textbf{Maze}}} 

 & \multirow[c]{3}{*}{\texttt{{Stitch}}} 
 & \texttt{Medium} & $29$ {\tiny $\pm 5$} & $12$ {\tiny $\pm 2$} & $12$ {\tiny $\pm 3$} & $18$ {\tiny $\pm 2$} & $71$ {\tiny $\pm 3$} & ${\textbf{96}}$ {\tiny $\pm 4$} 
 & \valstd{ {90} } { 2 }
 & \valstd{ {90} } { 4 }
  & \valstd{ {91} } { 1 } & \valstd{ \underline{92} } { 2 }
 \\
 &  & \texttt{Large} & $6$ {\tiny $\pm 3$} & $1$ {\tiny $\pm 1$} & $0$ {\tiny $\pm 0$} & $3$ {\tiny $\pm 1$} & $6$ {\tiny $\pm 1$} & ${31}$ {\tiny $\pm 3$}
 & \valstd{ {58} } { 5 }
 & \valstd{ \underline{59} } { 2 }
 & \valstd{ {50} } { 3 }
 & \valstd{ \textbf{64} } { 4 }
 \\
 &  & \texttt{Giant} & $0$ {\tiny $\pm 0$} & $0$ {\tiny $\pm 0$} & $0$ {\tiny $\pm 0$} & $0$ {\tiny $\pm 0$} & $0$ {\tiny $\pm 0$} & ${12}$ {\tiny $\pm 2$} 
 & \valstd{ {7} } { 1 }
 & \valstd{ \underline{{42}} } { 1 }
 & \valstd{ {23} } { 6 }
 & \valstd{ \textbf{49} } { 1 }
 \\

\bottomrule
\end{tabular}
}
\vspace{0.5em}
\caption{
\textbf{Success rates  [\%] on AntMaze and HumanoidMaze in OGBench.} 
We benchmark our method on the 5 test-time tasks defined in OGBench with 20 episodes per task. 
Results are averaged over 3 seeds and standard deviations are shown after the $\pm$ sign.
}
\label{table:ogb_ant_humanoid}

\end{table*}

\begin{table*}[t]

\centering
\adjustbox{scale=0.97,center}
{
\footnotesize
\setlength{\tabcolsep}{5pt}
\begin{tabular}{ccc ccc ccc ccc cccc}
\toprule
\textbf{Environment} & \textbf{Size} & \textbf{GCBC} & \textbf{GCIVL} & \textbf{GCIQL} & \textbf{QRL} & \textbf{CRL} & \textbf{HIQL}  & \makecell{\textbf{GSC} \\ (4D)} &  \makecell{\textbf{CD} \\ (4D)}  
& \makecell{\textbf{GSC} \\ (17D)} & \makecell{\textbf{CD} \\ (17D)} & \makecell{\textbf{CDGS} \\ (17D)} & \makecell{\textbf{Ours} \\ (17D)}\\
\midrule
\multirow[c]{2}{*}{\makecell{\textbf{AntSoccer} }} 
 
 & \texttt{Arena} & ${34}$ {\tiny $\pm 4$} & $21$ {\tiny $\pm 3$} & $5$ {\tiny $\pm 2$} & $2$ {\tiny $\pm 1$} & $2$ {\tiny $\pm 1$} & $23$ {\tiny $\pm 2$} 
 & \valstd{ {41} } { 4 }
 & \valstd{ {55} } { 6 } 
 & \valstd{ {65} } { 3 }
 & \valstd{ {61} } { 3 }
  & \valstd{ {65} } { 3 }
  & \valstd{ \textbf{66} } { 2 } \\
 & \texttt{Medium} & $2$ {\tiny $\pm 1$} & $1$ {\tiny $\pm 0$} & $0$ {\tiny $\pm 0$} & $0$ {\tiny $\pm 0$} & $0$ {\tiny $\pm 0$} & ${8}$ {\tiny $\pm 2$} 
 & \valstd{ {5} } { 2 } 
 & \valstd{ {13} } { 1 } 
 & \valstd{ {12} } { 2 } 
 & \valstd{ {17} } { 3 }
& \valstd{ {9} } { 3 }
 & \valstd{ \textbf{20} } { 2 } \\

\bottomrule
\end{tabular}
}
\vspace{0.5em}
\caption{
\textbf{Success rates [\%] on high-dimensional AntSoccer in OGBench.}
We evaluate two AntSoccer environments with 17D state dimensions. Results are averaged over 3 seeds and standard deviations are shown after the $\pm$ sign.
}
\label{table:ant_soccer}
\vspace{-0.5em}
\end{table*}

\section{Experiments}

In this section, we evaluate \ourmethod's performance on the long-horizon trajectory stitching benchmark from OGBench~\cite{ogbench_park2024}. For all experiments, we consider the input as either the 2D position of the center of mass or a higher-dimensional state that also includes the joint poses of the agent. 

\textbf{Baselines.} We compare the performance of our method against baselines that perform heuristic stitching (CD)~\cite{luo2025compdiffuser}, mode averaging (GSC)~\cite{mishra2023generative} and compute scaled mode averaging (CDGS)~\cite{mishra2025compositional}. In addition, we borrow the goal conditioned reinforcement learning benchmark from OGBench~\cite{ogbench_park2024} comprising of goal-conditioned behavioral cloning (GCBC)~\cite{lynch2020learning,ghosh2019learning},
goal-conditioned implicit V-learning (GCIVL) and Q-learning (GCIQL)~\cite{kostrikov2021offline},
Contrastive RL (CRL)~\cite{eysenbach2022contrastive}, 
Quasimetric RL (QRL)~\cite{wang2023optimal}, 
and Hierarchical implicit Q-learning (HIQL)~\cite{park2024hiql}.

\begin{figure}
    \centering
    \includegraphics[width=0.99\linewidth]{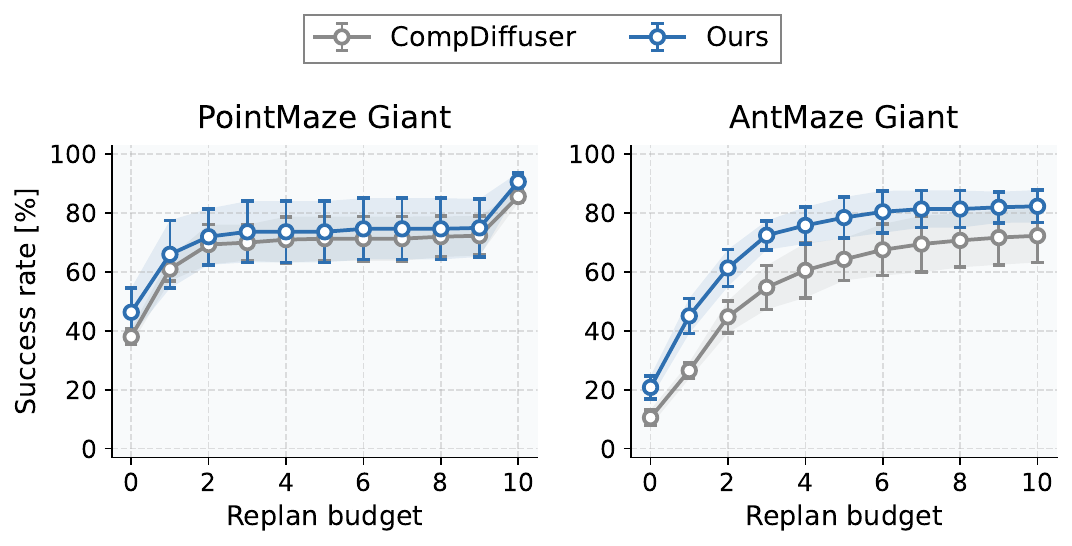}
    \caption{\textbf{Performance at different replan budgets}. We compare the success rates of our method and CD on PointMaze Giant and AntMaze Giant as a function of the \edited{allowed number of replans}. Our method consistently outperforms CD across all replan budgets.}
    \label{fig:replan}
    \vspace{-0.3em}
\end{figure}

\subsection{Effectiveness Study}

\textbf{\ourmethod{} effectively solves trajectory stitching.} We conduct experiments on the PointMaze domain where the short-horizon diffusion model is trained to predict a sequence of $x$-$y$ positions. At inference, \ourmethod{} generates plans for unseen start and goal pairs with horizons much beyond the training data. As shown in~\autoref{tab:pointmaze_ben_ogb}, \edited{our method matches the baselines} for \texttt{Medium} and \texttt{Large} mazes. With increasing plan length requirements for the \texttt{Giant} maze, \edited{the bridge plus boundary-reaction update promotes coherent boundary consistency and substantially improves performance over baselines}. We conduct similar experiments with AntMaze and HumanoidMaze, where the trajectory planning is done to get the sequence of $x$-$y$ positions that is further executed with the help of an inverse dynamics model~\cite{ajay2022conditional, luo2025compdiffuser}. As illustrated in~\autoref{table:ogb_ant_humanoid}, we observe consistent improvement in performance across all mazes when compared to heuristic stitching and simple model averaging baselines.

\begin{table}[h]
    \centering
    \footnotesize
    \begin{tabular}{ccc ccc}
    \toprule
       \multirow{1}{*}{\textbf{Environment}} & \multirow{1}{*}{\textbf{Size}}  & \multirow{1}{*}{\textbf{Method}} & \textbf{2D} & \textbf{15D} \\
       \midrule
        \multirow{4}{*}{\textbf{AntMaze}} & \multirow{2}{*}{\texttt{Large}} &  CD & \valstd{{74}}{4} & \valstd{83}{3} \\
      &  &  Ours & \valstd{\textbf{82}}{1} & \valstd{\textbf{89}}{1} \\
            \cmidrule(lr){2-5}
        \ & \multirow{2}{*}{\texttt{Giant}} &  CD & \valstd{{72}}{9} & \valstd{41}{3} \\
      &  &  Ours & \valstd{\textbf{82}}{6} & \valstd{\textbf{51}}{3} \\
        \bottomrule
    \end{tabular}
    \vspace{0.5em}
    \caption{\textbf{Success rates  [\%] on high-dimensional AntMaze stitching tasks}. We compare our method against CompDiffuser (CD) on AntMaze Large and Giant with 15-dimensional state space. Our energy-based approach consistently outperforms CD, particularly in the more challenging giant environment. }
    \label{tab:high_dim_antmaze}
    \vspace{-0.3em}
\end{table}

\textbf{\ourmethod{} improves performance in high-dimensional regime.} We further evaluate the performance in high-dimensional trajectory stitching scenarios, where we incorporate the joint positions along with the $x$-$y$ position of the center of mass of the agents. The algorithms directly synthesize a sequence of high-dimensional states. We show results on 2D and 15D AntMaze in~\autoref{tab:high_dim_antmaze}. Although success rates decrease with \edited{higher-dimensional planning} due to added complexity, \ourmethod{} consistently outperforms CD. We finally evaluate \ourmethod{} on AntSoccer, where the task is to synthesize a plan for an Ant agent to reach a soccer ball and push it to the provided goal. We \edited{evaluate} against OGBench standards across two maze environments, \texttt{Arena} and \texttt{Medium}, as shown in \autoref{table:ant_soccer}. We test under 17D configuration (tracking ant and ball positions plus all ant joint states). \edited{In both environments, \ourmethod{} achieves the highest mean success among the compared methods, suggesting that our formulation extends to joint dynamics and multi-object planning scenarios.}

\subsection{Ablation Study}

\textbf{\ourmethod{} improves the feasibility of long-horizon plans.} In our evaluation setup, we try to track the planned sequence of states using an inverse dynamics~(ID) model. Each time, we (i) execute actions based on ID outputs and (ii) progressively update the target state for the inverse dynamics model to be the immediate next planned state in the sequence. If the distance between the current state and the target state crosses a threshold, we assume that the agent cannot track the sequence anymore and \edited{fall back on replanning a new sequence of states to the goal.} An algorithm that outputs more feasible long-horizon plans will require \edited{fewer replans} to achieve the goal. As illustrated in ~\autoref{fig:replan}, given a fixed \edited{maximum number of replans}, we observe that \ourmethod{} exceeds CD's performance \edited{suggesting that our energy formulation not only enforces feasible boundary consistency but also keeps the local segments in-distribution after stitching.}

\begin{figure}[t]
    \centering
    \includegraphics[width=1\linewidth]{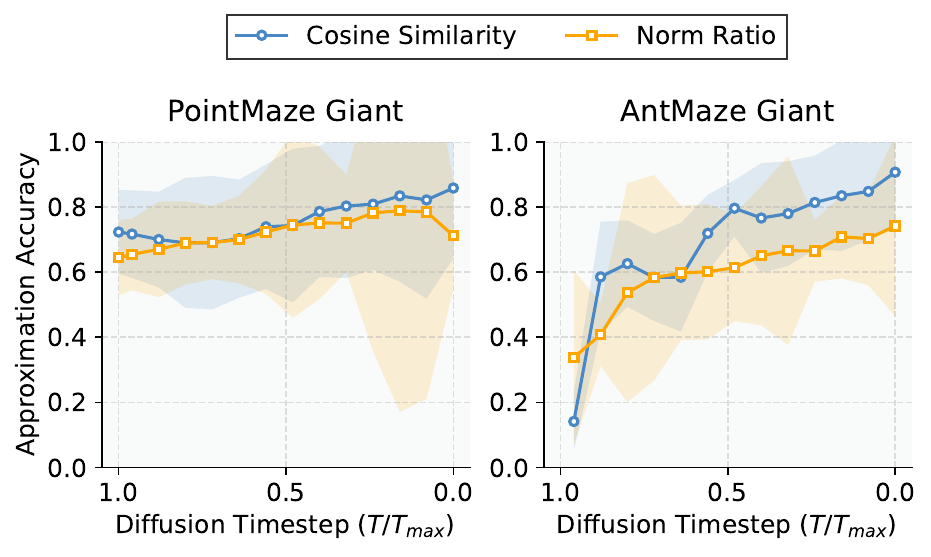}
    \caption{\textbf{Fidelity of reaction term approximation.} We compare our approximated reaction term with the exact JVP computed via backpropagation. We report the Cosine Similarity and Norm Ratio across normalized diffusion timesteps ($1.0 \to 0.0$) on PointMaze and AntMaze Giant. The approximation shows reasonable alignment, particularly in the later denoising stages. Shaded regions indicate standard deviation. }
    \label{fig:approx_acc}
    \vspace{-0.3em}
\end{figure}

\edited{\textbf{\ourmethod{} performs better with the exact boundary reaction term.}} 
To assess the effectiveness of our score approximation (\cref{sec:efficient_score}), we compare it against the \edited{exact boundary reaction term computed via automatic differentiation through the chunk denoiser}. As shown in~\autoref{tab:approx}, the exact computation yields slightly higher success rates on both PointMaze ($92\%$ vs. $91\%$) and AntMaze ($89\%$ vs. $87\%$). \edited{These results are consistent with the theoretical derivation of the boundary reaction term} and suggest that precise gradient feedback further enhances trajectory consistency. However, the exact calculation increases inference runtime by almost $3\times$ (approx. $22$s vs. $8$s). \edited{Our approximation thus provides a favorable trade-off, retaining most benefits of the energy-derived correction while ensuring efficiency.}

To further evaluate the approximation fidelity, we plot the approximation accuracy statistics (cosine similarity
and norm ratio) between the approximate and exact reaction terms in~\autoref{fig:approx_acc}. We observe that the accuracy of the approximation improves significantly as reverse diffusion proceeds ($t \to 0$). \edited{This trend is consistent with the theoretical error bound in Appendix~\ref{app:reaction_error_bound}, which predicts that the message approximation error scales with the noise level $\sigma_t$ across timesteps.}
\begin{table}[]
    \centering
    \footnotesize
    \begin{tabular}{cc ccc}
    \toprule
       \multirow{2}{*}{\textbf{Environment}}  & \multirow{2}{*}{\textbf{Compute}} & \textbf{Success Rate} & \textbf{Runtime} \\
      & & [\%] & [sec / episode] \\ 
       \midrule
        \multirow{2}{*}{\textbf{PointMaze}}  &  \texttt{Exact} & \valstd{\textbf{86}}{2} & \valstd{17}{2} \\
      &  \texttt{Approx} & \valstd{84}{2} & \valstd{\textbf{8}}{2} \\
        \midrule
        \multirow{2}{*}{\textbf{AntMaze}}  &  \texttt{Exact} & \valstd{\textbf{82}}{5} & \valstd{27}{17} \\
      &  \texttt{Approx} & \valstd{82}{6} & \valstd{\textbf{25}}{19} \\
        \bottomrule
    \end{tabular}
    \vspace{0.5em}
    \caption{\textbf{Approximate vs. exact reaction term}. \edited{We evaluate success rate and inference time on PointMaze and AntMaze Giant. Exact mode slightly improves success rate but is substantially slower at inference.}}
    \label{tab:approx}
    \vspace{-0.8em}
\end{table}

\paragraph{Inference runtime across methods.}
\autoref{tab:runtime} reports the planning runtime per episode for all four diffusion stitching methods on the two Giant environments (PointMaze and AntMaze), measured on a single NVIDIA RTX~5090 under the matched evaluation protocol of Fig.~\ref{fig:teaser}. \ourmethod{} runs at a comparable cost, adding only a moderate overhead on AntMaze from the block-tridiagonal solve. Notably, CDGS is excessively slower, since each reverse step performs multiple resampling rounds. \ourmethod{} attains the best success rate at a runtime close to CD and far below CDGS.

\begin{table}[t]
    \centering
    \footnotesize
    \setlength{\tabcolsep}{4.5pt}
    \begin{tabular}{l cc cc}
    \toprule
       \multirow{2}{*}{\textbf{Method}}  & \multicolumn{2}{c}{\textbf{PointMaze Giant}} & \multicolumn{2}{c}{\textbf{AntMaze Giant}} \\
       \cmidrule(lr){2-3} \cmidrule(lr){4-5}
       & SR [\%] & Runtime [s] & SR [\%] & Runtime [s] \\
       \midrule
       GSC              & $39$ & \valstd{5}{0}   & $36$ & \valstd{20}{9} \\
       CD               & $77$ & \valstd{8}{2}   & $72$ & \valstd{17}{10} \\
       CDGS ($4\times$)   & $69$ & \valstd{14}{4}  & $78$ & \valstd{46}{25} \\
       CDGS ($8\times$)   & $68$ & \valstd{29}{9}  & $82$ & \valstd{168}{39}\\
       \textbf{\ourmethod{} (Ours)} & $\mathbf{84}$ & \valstd{8}{2} & $\mathbf{82}$ & \valstd{25}{19} \\
       \bottomrule
    \end{tabular}
    \vspace{0.5em}
    \caption{\textbf{Inference runtime on the Giant environments.} Success rate [SR, \%] and planning runtime per episode [sec] (mean $\pm$ std over the episodes), measured on a single RTX~5090 for planner only. \ourmethod{} attains the highest success rate at a runtime comparable to CompDiffuser (CD) and far below CDGS at both $4\times$ and $8\times$ resampling rounds.}
    \label{tab:runtime}
\end{table}

\section{Prior Work}

\paragraph{Diffusion models for planning.} Diffusion models \cite{sohl2015deep, ho2020denoising} have been widely adopted for generative planning \cite{janner2022planning, ajay2022conditional, chi2023diffusionpolicy, wang2022diffusion, chen2024hierarchicaldiffuser}. Such methods formulate trajectory generation as probabilistic inference and have been applied to motion planning \cite{carvalho2023motion, luo2024potential} and task planning \cite{yang2023planning, fang2024dimsam}. Recent advancements integrate diffusion with hierarchical planning \cite{li2023hierarchical, chen2024simple}, uncertainty-aware sampling \cite{sun2024conformal}, and online replanning \cite{zhou2024adaptive}. However, these methods are generally limited to planning horizons as seen in training data. We build on compositional planning \cite{mishra2023generative, luo2025compdiffuser} to resolve the ``out-of-span" problem, and extend planning horizons significantly beyond training regime.

\paragraph{Trajectory stitching for long-horizon planning.} Effectively using fragmented trajectories to plan for long-horizon tasks allows reusing offline information to solve unseen tasks at inference time. Some effective approaches include using generative models \cite{lidiffstitch2024,lu2024synthetic,kim2024stitching,lee2024gta,li2024augmenting,liu2024enhancing}, model-based approaches \cite{char2022bats,lei2024mgda,zhoufree,hepburn2022modelbased}, and clustering \cite{ghugareclosing2024}. Supervised methods, such as sequence modeling \cite{chen2021decision,huang2025drdt3,wu2024elastic,zhuangreinformer,wang2024critic}, latent space learning \cite{zeng2023goal,venkatraman2023reasoning,zhang2024context}, and dynamic programming \cite{gao2024act,kim2025adaptive}, also demonstrate stitching capabilities. 
We propose a method to solve the stitching problem directly within the sampling process of a compositional generative framework.

\paragraph{Generative trajectory stitching.} Recent works leverage compositional generative modeling \cite{du2020compositional,du2023reduce,garipov2023compositional, du2024compositional,mahajan2024compositional,bradley2025mechanisms,okawa2024compositional,thornton2025composition} for trajectory stitching in planning \cite{yang2023compositional,ajay2023compositional,luo2024potential, mishra2023generative, mishra2024generative, luo2025compdiffuser, mishra2025compositional} and policy learning \cite{wang2024poco,patil2024composing}. We identify that current stitching heuristics ignore how interior predictions shift with \edited{boundary-condition} perturbations, inducing a non-conservative score field with non-zero curl. \edited{By restoring conservativity for the induced bridge-energy model, we introduce a boundary reaction term that propagates interior inconsistencies to chunk boundaries, promoting coherent trajectories.}

\edited{\paragraph{Training-time trajectory stitching.}
Recent methods, such as SCoTS~\citep{lee2025scots} and HM-Diffusion~\citep{chen2025hmdiffusion}, propose addressing long-horizon planning by modifying training data (e.g., via trajectory segmenting and progressive augmentation). While effective, these approaches often require task-specific state representations, limiting transfer to domains with high-dimensional observations or complex dynamics like AntSoccer or manipulation. In contrast, \ourmethod{} is an inference-time composition rule for a fixed local chunk denoiser. Moreover, as analyzed in Appendix~\ref{app:sufficient_conditions_exact_better}, given a well-trained local expert, \ourmethod{} can remove the structural inference-time error associated with the dropped reaction term in the induced bridge-energy model. Thus, ECD is natural to combine with methods that improve the local planner itself, such as SCoTS.}

\section{Conclusion}

\edited{In this paper, we proposed an energy-based compositional diffuser (\ourmethod{}), which composes local energies into a global scalar objective whose negative gradient defines an energy-derived denoising correction. This perspective yields a conservative field for the induced bridge-energy model and exposes a missing boundary reaction term that propagates interior inconsistency back to chunk boundaries. To ensure practical inference, we further introduce a Markov-structured approximation that computes the reaction via a single block-tridiagonal solve per chunk, yielding linear time complexity in the planning horizon. Finally, we validate our approach in maze environments across both low- and high-dimensional settings. Limitations and future directions are discussed in Appendix~\ref{app:diss}.}

\section*{Impact Statement}

\edited{This work improves long-horizon robotic planning by making compositional diffusion planners more consistent and efficient. Potential positive impacts include more reliable robot navigation, manipulation, and task execution learned from fragmented offline data, which may reduce the need for costly full-task demonstrations. However, improved planning methods could also be applied in safety-critical or harmful robotic systems if deployed without appropriate safeguards. Our experiments are limited to simulated benchmark environments, and real-world deployment would require careful validation, monitoring, and oversight to ensure safe and responsible behavior.}

\section*{Acknowledgment}

\edited{Tao Sun is supported by the Stanford Graduate Fellowship (SGF). The authors thank Liyuan Zhu for discussion. We thank Stanford Marlowe Cluster~\cite{kapfer2025marlowe} and NVIDIA Academic Grant Project for providing GPU resources for this project.}

\bibliography{example_paper}
\bibliographystyle{icml2026}

\newpage
\appendix
\onecolumn

\section{\edited{Why Is CompDiffuser's Stitching Non-Conservative?}}
\label{app:cd_nonconservative_full}

\edited{This section proves Proposition~\ref{prop:nonconservative} and makes explicit how the argument applies to the CompDiffuser (CD) variants used in practice. We use the selector notation from the main text for boundary-conditioned chunk denoisers.}

\subsection{Notation}
Let $\mathbf{x}\in\mathbb{R}^{(L+1)D}$ be the long noisy trajectory at diffusion time $t$.  Chunk $k$ selects local coordinates using $\mathbf{S}_k$.  Within the chunk, $\mathbf{P}_k$ selects the coordinates whose noisy mean is predicted or scored, and $\mathbf{O}_k$ selects the coordinates used as the local \edited{boundary condition}.  Thus
\begin{equation}
    y_k = \mathbf{P}_k\mathbf{S}_k\mathbf{x},
    \qquad
    c_k = \mathbf{O}_k\mathbf{S}_k\mathbf{x}.
\end{equation}
\edited{The local denoiser induces a noisy mean}
\edited{\begin{equation}
    \boldsymbol{\mu}_k(c_k,t)
    =
    \sqrt{\bar\alpha_t}\,\hat{\mathbf{x}}_{0,\theta}^{k}(\operatorname{sg}[y_k],c_k,t),
\end{equation}}
\edited{where the dependence on $c_k$ is essential and $\operatorname{sg}[y_k]$ denotes that the selected-coordinate input is treated as stop-gradient for the correction energy. Here $c_k$ is the local boundary condition vector supplied to the chunk denoiser.}

{Let $\Pi$ denote the stitching or aggregation operator used by CD, \textit{e.g.}, overwrite, averaging, overlap blending, or candidate selection followed by blending.}  A stitched noisy mean can be written abstractly as
\begin{equation}
    \boldsymbol{\nu}_{\mathrm{CD}}(\mathbf{x},t)
    =
    \Pi\big(\{\boldsymbol{\mu}_k(\mathbf{O}_k\mathbf{S}_k\mathbf{x},t)\}_{k=0}^{K}\big),
\end{equation}
which induces the usual VP noisy-mean score estimate
\begin{equation}
    s_{\mathrm{CD}}(\mathbf{x},t)
    :=
    \frac{\boldsymbol{\nu}_{\mathrm{CD}}(\mathbf{x},t)-\mathbf{x}}{\sigma_t^2}.
    \label{eq:app_cd_score}
\end{equation}
{For $s_{\mathrm{CD}}$ to be the gradient of a scalar potential, its Jacobian must be symmetric on any simply connected region where the stitching rule is smooth.}

\subsection{\texorpdfstring{{Proof of the Non-Conservative Stitching Proposition}}{Proof of the Non-Conservative Stitching Proposition}}
\label{app:prop31_sync}

\begin{proposition}[Non-conservative score field of chunk stitching]
\label{prop:app_nonconservative_stitching}
Fix $t$ and consider a region where the stitching rule $\Pi$ is differentiable and its selected/blending weights are fixed.  Suppose there exist a local prediction coordinate $a$ and a \edited{boundary-conditioning} coordinate $b\in \mathbf{O}_k\mathbf{S}_k\mathbf{x}$ such that the stitched mean uses $\mu_{k,a}$ with nonzero coefficient and
\begin{equation}
    \frac{\partial \mu_{k,a}}{\partial x_b}\neq 0.
    \label{eq:app_nonzero_condition_derivative}
\end{equation}
\edited{Assume moreover that the remaining stitched terms do not exactly cancel this contribution, so $\partial \nu_a/\partial x_b\neq 0$ on the considered region.} Suppose further that the stitching rule assigns coordinate $b$ by copying, overwriting, averaging, or blending values whose local dependence does not include $x_a$.  Then the Jacobian of $s_{\mathrm{CD}}(\mathbf{x},t)$ is asymmetric, and therefore $s_{\mathrm{CD}}$ is generally non-conservative.
\end{proposition}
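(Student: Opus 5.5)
The plan is to compute the two mirrored off-diagonal entries of the Jacobian of $s_{\mathrm{CD}}$ directly and show they differ. On the region in question, $\Pi$ is differentiable and its blending weights are fixed. So $\boldsymbol{\nu}_{\mathrm{CD}}$ is a fixed linear combination of the local predictions $\mu_{k',a'}(\mathbf{O}_{k'}\mathbf{S}_{k'}\mathbf{x},t)$, plus copied raw coordinates where the rule copies. By \eqref{eq:app_cd_score},
\[
\frac{\partial s_{\mathrm{CD},i}}{\partial x_m}=\frac{1}{\sigma_t^2}\Big(\frac{\partial \nu_i}{\partial x_m}-\delta_{im}\Big).
\]
The identity term is symmetric and drops out of the off-diagonal comparison. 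Symmetry of the Jacobian at the pair $(a,b)$ therefore reduces to the condition $\partial\nu_a/\partial x_b=\partial\nu_b/\partial x_a$. We may take $a\neq b$, since $a$ is a predicted coordinate and $b$ is a boundary-conditioning coordinate on which $\mu_{k,a}$ genuinely depends.

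\textbf{Step 1: the $(a,b)$ entry is nonzero.} By the chain rule through the fixed weights,
\[
\frac{\partial\nu_a}{\partial x_b}=\sum_{k'} w_{k',a}\,\frac{\partial\mu_{k',a}}{\partial x_b}.
\]
The $k$-th summand is nonzero by \eqref{eq:app_nonzero_condition_derivative} together with the nonzero-coefficient assumption. The non-cancellation hypothesis then gives $\partial\nu_a/\partial x_b\neq0$ on the region.

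\textbf{Step 2: the $(b,a)$ entry is zero.} By hypothesis, $\nu_b$ is produced by copying, overwriting, averaging or blending quantities whose local dependence excludes $x_a$. Concretely, $\nu_b=\sum_{k''}w_{k'',b}\,\mu_{k'',b}(c_{k''},t)$, possibly with a raw copy of $x_b$ included. Each $\mu_{k'',b}$ here depends only on its conditioning vector $c_{k''}$, because the selected-coordinate input is stop-gradient by the convention of Eq.~\eqref{eq:noisy-mean}, and $c_{k''}$ does not contain $x_a$. A raw copy of $x_b$ has zero derivative in $x_a$ since $a\neq b$. Hence $\partial\nu_b/\partial x_a=0$.

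\textbf{Conclusion.} Steps 1 and 2 give $J_{ab}\neq J_{ba}$, so the Jacobian is asymmetric. On a simply connected open set where $s_{\mathrm{CD}}$ is $C^1$, a gradient field must have a symmetric Jacobian, since a gradient field's Jacobian is a Hessian and Schwarz's theorem makes Hessians symmetric. Equivalently, the curl component $J_{ab}-J_{ba}$ is nonzero. Therefore $s_{\mathrm{CD}}$ is not the gradient of any scalar potential, and in particular of no log-density, on that region. Because the assumptions hold on an open region rather than at a single point, this asymmetry cannot be removed by modifying the field on a null set.

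\textbf{Main obstacle.} Step 2 is where care is needed: one has to be precise about what ``local dependence does not include $x_a$'' means. It must cover both the conditioning inputs and the noisy selected-coordinate input. For the latter I would invoke the stop-gradient convention. Without that convention, I would add an explicit assumption that the chunks assigning $b$ do not contain $a$ among their selected coordinates. For CD's AR variants I would also check the hypothesis concretely. In \texttt{ar\_forward}, the boundary coordinate of chunk $k$ is overwritten by chunk $k-1$'s prediction, which is conditioned on earlier boundaries only and not on chunk $k$'s interior $x_a$. \texttt{ar\_backward} is the mirror case. The interleaved variant is handled the same way, since there the neighbors' conditioning sets are disjoint from the interior of chunk $k$.
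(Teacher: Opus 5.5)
Your proposal is correct and follows the same route as the paper. You reduce to asymmetry of $\nabla_{\mathbf{x}}\boldsymbol{\nu}_{\mathrm{CD}}$ (the $-I$ part being symmetric), obtain $\partial\nu_a/\partial x_b\neq 0$ from the non-cancellation hypothesis and $\partial\nu_b/\partial x_a=0$ from the assumption on how $b$ is assigned, and conclude from the unequal mixed partials. One small refinement: $a\neq b$ is already forced by those two hypotheses (if $a=b$ they would assert $\partial\nu_a/\partial x_a\neq 0$ and $\partial\nu_a/\partial x_a=0$ at once), which is a cleaner justification than your appeal to the predicted-versus-boundary roles of the coordinates.
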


\begin{proof}
From Eq.~\eqref{eq:app_cd_score},
\begin{equation}
    \nabla_{\mathbf{x}}s_{\mathrm{CD}}(\mathbf{x},t)
    =
    \sigma_t^{-2}\big(\nabla_{\mathbf{x}}\boldsymbol{\nu}_{\mathrm{CD}}(\mathbf{x},t)-I\big).
\end{equation}
Thus it is enough to show that $\nabla_{\mathbf{x}}\boldsymbol{\nu}_{\mathrm{CD}}$ is asymmetric.  By assumption, on the considered region the stitched coordinate $\nu_a$ contains the local prediction $\mu_{k,a}$ with some nonzero coefficient $\gamma$:
\begin{equation}
    \nu_a(\mathbf{x},t)
    =
    \gamma\,\mu_{k,a}(\mathbf{O}_k\mathbf{S}_k\mathbf{x},t)+h_a(\mathbf{x},t),
    \qquad \gamma\neq 0,
\end{equation}
where $h_a$ collects the remaining stitched terms.  \edited{By the non-cancellation assumption,}
\begin{equation}
    \frac{\partial (s_{\mathrm{CD}})_a}{\partial x_b}
    =
    \sigma_t^{-2}\frac{\partial \nu_a}{\partial x_b}
    \neq 0.
\end{equation}
However, coordinate $b$ is assigned by a heuristic stitching operation that does not reciprocally depend on $x_a$.  Hence, locally,
\begin{equation}
    \frac{\partial (s_{\mathrm{CD}})_b}{\partial x_a}
    =
    \sigma_t^{-2}\frac{\partial \nu_b}{\partial x_a}
    =0.
\end{equation}
{The mixed partial derivatives are unequal, so the Jacobian is asymmetric. Therefore the field cannot be written as $\nabla \phi(\mathbf{x})$ for any scalar potential $\phi$ on that region.}
\end{proof}

\paragraph{Why this is the missing reaction.}
The bridge energy used by ECD has the form
\begin{equation}
    E_k(\mathbf{x},t)
    =
    \frac{1}{2\sigma_t^2}
    \left\|\mathbf{P}_k\mathbf{S}_k\mathbf{x}
    -\boldsymbol{\mu}_k(\mathbf{O}_k\mathbf{S}_k\mathbf{x},t)\right\|_{W_k}^{2}.
\end{equation}
Let
\begin{equation}
    r_k
    =
    W_k\left(\mathbf{P}_k\mathbf{S}_k\mathbf{x}
    -\boldsymbol{\mu}_k(\mathbf{O}_k\mathbf{S}_k\mathbf{x},t)\right),
    \qquad
    J_{O,k}
    =
    \frac{\partial \boldsymbol{\mu}_k}{\partial (\mathbf{O}_k\mathbf{S}_k\mathbf{x})}.
\end{equation}
Then the chain rule gives
\begin{equation}
    -\nabla_{\mathbf{x}}E_k
    =
    -\sigma_t^{-2}\mathbf{S}_k^\top\mathbf{P}_k^\top r_k
    +
    \sigma_t^{-2}\mathbf{S}_k^\top\mathbf{O}_k^\top J_{O,k}^\top r_k.
    \label{eq:app_exact_reaction}
\end{equation}
\edited{The second term is the boundary reaction. Heuristic stitching uses local predictions while omitting this reciprocal derivative, and Proposition~\ref{prop:app_nonconservative_stitching} formalizes the resulting asymmetry.}

\paragraph{Auto-regressive variant.}
\label{app:ar_cd_analysis}
{CD's auto-regressive (AR) variant replaces averaging with directional overwriting, which makes the asymmetry direct.}

\begin{proposition}[Autoregressive overwrite is non-conservative]
\label{prop:app_ar_nonconservative}
\edited{Assume a chunk prediction depends nontrivially on a boundary-condition coordinate. Then both forward and backward autoregressive CD induce a generally non-conservative score field.}
\end{proposition}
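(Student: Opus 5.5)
The plan is to reduce the statement to Proposition~\ref{prop:app_nonconservative_stitching}. Fix $t$ and work on a region where the AR overwrite pattern is fixed, so that $\Pi$ is a fixed coordinate selection. Since $\nabla_{\mathbf{x}}s_{\mathrm{CD}}=\sigma_t^{-2}(\nabla_{\mathbf{x}}\boldsymbol{\nu}_{\mathrm{CD}}-I)$, it is enough to find coordinates $a\neq b$ with $\partial\nu_a/\partial x_b\neq 0$ but $\partial\nu_b/\partial x_a=0$. I treat \texttt{ar\_forward} ($w_k=0$ in Eq.~\ref{eq:stitching}) and \texttt{ar\_backward} ($w_k=1$) separately; the two cases mirror each other.

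For \texttt{ar\_forward}, each chunk $k+1$ is conditioned on a boundary segment $c_{k+1}$. This segment lies in the overlap $C_k\cap C_{k+1}$ or at the start state, and its values are taken from the earlier chunk's prediction or the fixed start. Choose $k$ and an index $a\in C_{k+1}$ such that $\nu_a=\mu_{k+1,a}$ with coefficient $\gamma=1$. Choose $b$ to be a boundary coordinate of $c_{k+1}$ with $\partial\mu_{k+1,a}/\partial x_b\neq0$; such a $b$ exists by the assumption of the proposition. Because overwriting gives $\nu_a$ exactly one source, $h_a\equiv 0$. No cancellation can occur, so the non-cancellation hypothesis of Proposition~\ref{prop:app_nonconservative_stitching} holds automatically, which is the point of the remark that the asymmetry is ``direct''.

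The remaining hypothesis is that $\nu_b$ does not depend on $x_a$. I will check it from the causal ordering. Coordinate $b$ is assigned by chunk $k$, or by the fixed start. Chunk $k$ is conditioned on $c_k$, which lies at indices at or before $s_k<s_{k+1}\le$ the boundary region. Since $a$ lies strictly after the boundary segment of chunk $k+1$, $x_a\notin c_k$. The interior input enters only through $\operatorname{sg}[\cdot]$, so it is excluded from the derivative. Hence $\partial\nu_b/\partial x_a=0$, and Proposition~\ref{prop:app_nonconservative_stitching} gives an asymmetric Jacobian. The \texttt{ar\_backward} case is identical after reversing the index order: the earlier chunk is conditioned on a boundary drawn from the later chunk or the goal, and $a$ is taken before that boundary.

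I expect the main obstacle to be the index bookkeeping that guarantees $a\notin c_k$ and $a\neq b$ under the stride condition $l/2<d<l$. This requires the boundary segment to be a short segment at the overlap edge, so that there is a predicted coordinate of chunk $k+1$ beyond it. I would verify this using $e_k=kd+l<(k+1)d+l=e_{k+1}$, which lets me pick $a\in(e_k,e_{k+1}]$, a region predicted only by chunk $k+1$. If that region consists only of the goal, I would instead pick $a$ in the interior of $C_{k+1}$ outside the overlap, which is nonempty because $d>l/2$. Finally, the conclusion is stated as ``generally'' non-conservative because it relies on the genericity assumption $\partial\mu/\partial c\neq0$ at some $\mathbf{x}$. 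By continuity of the network, this gives asymmetry on an open set, which rules out a potential there.
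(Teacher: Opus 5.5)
Your argument is essentially the paper's. Both exhibit $a,b$ with $\partial\nu_a/\partial x_b=\partial\mu_{k+1,a}/\partial x_b\neq 0$, where the overwrite leaves a single source so nothing can cancel, and $\partial\nu_b/\partial x_a=0$ because $b$ is supplied by the previous chunk or the fixed start; the \texttt{ar\_backward} case is the mirror image. The paper argues this directly with $a\in C_k\cap C_{k+1}$ rather than citing Proposition~\ref{prop:app_nonconservative_stitching}. Your choice $a\notin C_k$ makes $x_a\notin c_k=\mathbf{O}_k\mathbf{S}_k\mathbf{x}$ automatic, and that, not ``$c_k$ lies at or before $s_k$'', is the correct justification.

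One index slip needs fixing. For $k+1<K$, the interval $(e_k,e_{k+1}]$ is not owned by chunk $k+1$: its top part $[s_{k+2},e_{k+1}]$ lies in $C_{k+1}\cap C_{k+2}$, and forward AR overwrites it with $\mu_{k+2}$. So take $a\in(e_k,s_{k+2})$, which is nonempty when $2d\ge l+2$ (not merely $d>l/2$), or use the last chunk with $a\in(e_{K-1},L)$.
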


\begin{proof}
\edited{Consider two neighboring chunks $k$ and $k+1$ with shared coordinate $a\in C_k\cap C_{k+1}$. In forward AR, the later chunk overwrites the earlier one, so on the shared region}
\begin{equation}
    \nu_a=\mu_{k+1,a}(c_{k+1},t).
\end{equation}
\edited{Let $b$ be a left boundary-condition coordinate of chunk $k+1$. By assumption,}
\begin{equation}
    \frac{\partial \nu_a}{\partial x_b}
    =
    \frac{\partial \mu_{k+1,a}}{\partial x_b}
    \neq 0.
\end{equation}
But $b$ is supplied by the previous chunk or by a fixed boundary condition and is not assigned through a rule that depends on $x_a$.  Thus $\partial \nu_b/\partial x_a=0$, and the Jacobian is asymmetric.

\edited{Backward AR gives the same result with the roles reversed. The earlier chunk overwrites the later one. Choose $a$ in the shared region assigned from chunk $k$ and let $b$ be a right boundary-condition coordinate of chunk $k$. Again $\partial \nu_a/\partial x_b\neq 0$ while $\partial \nu_b/\partial x_a=0$. Hence the score field is generally non-conservative.}
\end{proof}

\paragraph{Chunk-interleaved variant.}
\label{app:interleave_cd_analysis}
{CD's chunk-interleaved variant used in practice is an ordered composition of local updates. It is therefore better viewed as a proposal operator than as a single simultaneous score field. Its non-conservative character can still be formalized.}
\edited{Let $\mathcal{T}_k$ denote the local update that denoises chunk $k$ while treating the current neighboring boundary variables as conditions.}  One interleaved sweep is
\begin{equation}
    \mathcal{T}_{\mathrm{int}}
    =
    \mathcal{T}_K\circ\cdots\circ\mathcal{T}_1\circ\mathcal{T}_0.
\end{equation}
{If a sweep were an Euler step of a conservative score field, its infinitesimal displacement would have a symmetric Jacobian. The following result shows why this condition fails.}

\begin{proposition}[Interleaving does not restore integrability]
\label{prop:app_interleave_nonintegrable}
\edited{Suppose at least one local chunk update has a nonzero dependence on a boundary condition supplied by another chunk, and the reciprocal derivative is omitted as in Proposition~\ref{prop:app_nonconservative_stitching}.}  Then the first-order displacement induced by an interleaved sweep has an asymmetric Jacobian.  Moreover, even if individual block directions were conservative, the ordered composition generically introduces non-gradient Lie-bracket terms unless the block Jacobians commute.
\end{proposition}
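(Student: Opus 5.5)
We linearize the sweep $\mathcal{T}_{\mathrm{int}}$ around the current state and show that its first-order displacement inherits the same one-sided dependence that made the simultaneous stitched field non-conservative in Proposition~\ref{prop:app_nonconservative_stitching}.

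\textbf{Setup.} Write each local update as $\mathcal{T}_k(\mathbf{x})=\mathbf{x}+\eta\,g_k(\mathbf{x})$ with a small step $\eta$. Here
\[
g_k(\mathbf{x})=\sigma_t^{-2}\mathbf{S}_k^\top\mathbf{P}_k^\top\big(\boldsymbol{\mu}_k(\mathbf{O}_k\mathbf{S}_k\mathbf{x},t)-\mathbf{P}_k\mathbf{S}_k\mathbf{x}\big)
\]
is the chunk-$k$ block direction. It moves only the predicted coordinates of chunk $k$, and it is built with the reaction term omitted.

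Composing the updates and expanding in $\eta$ gives
\[
\mathcal{T}_{\mathrm{int}}(\mathbf{x})=\mathbf{x}+\eta\sum_k g_k(\mathbf{x})+\eta^2\sum_{i<j}\nabla g_j\, g_i+O(\eta^3).
\]
So the first-order displacement is $G=\sum_k g_k$, and the second-order part contains the ordered cross terms.

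\textbf{First claim (asymmetry at first order).} We apply the argument of Proposition~\ref{prop:app_nonconservative_stitching} to $G$.

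Take the chunk $k$ and the pair $(a,b)$ guaranteed by the hypothesis: $a$ is predicted by chunk $k$, and $b$ is a boundary condition of chunk $k$ supplied by another chunk. Then $\partial G_a/\partial x_b=\sigma_t^{-2}\gamma\,\partial\mu_{k,a}/\partial x_b\neq0$, assuming the same non-cancellation condition as before.

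Now consider $\partial G_b/\partial x_a$. The coordinate $b$ is moved only by the $g_{k'}$ of the chunks that predict it. Their boundary conditions do not include $x_a$, or if they do, the reciprocal derivative is by assumption omitted. Each such $g_{k'}$ therefore depends on $x_a$ only through its own identity term, and that term is $-x_b$, not $x_a$. Hence $\partial G_b/\partial x_a=0$.

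The mixed partials differ, so $\nabla G$ is asymmetric and $G$ is not the gradient of any scalar potential.

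\textbf{Second claim (Lie-bracket terms).} Suppose each $g_k$ were conservative, $g_k=\nabla\phi_k$. We split the second-order term into symmetric and antisymmetric parts:
\[
\sum_{i<j}\nabla g_j\,g_i=\tfrac12\sum_{i<j}\big(\nabla g_j\,g_i+\nabla g_i\,g_j\big)+\tfrac12\sum_{i<j}[g_i,g_j],
\]
where $[g_i,g_j]=\nabla g_j\,g_i-\nabla g_i\,g_j$.

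The symmetric part equals $\tfrac12\nabla\sum_{i<j}\langle g_i,g_j\rangle$, because $\nabla\langle\nabla\phi_i,\nabla\phi_j\rangle=\nabla^2\phi_i\nabla\phi_j+\nabla^2\phi_j\nabla\phi_i$. It is therefore a gradient.

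The antisymmetric part is the sum of Lie brackets. In general a bracket of two gradients is not itself a gradient. To make this concrete, we would exhibit a simple quadratic instance, $\phi_i=\tfrac12\mathbf{x}^\top A_i\mathbf{x}$. In that case $[g_i,g_j]=(A_jA_i-A_iA_j)\mathbf{x}$, whose Jacobian $A_jA_i-A_iA_j$ is antisymmetric. It is nonzero exactly when the block Hessians fail to commute, which is generic for overlapping chunks.

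\textbf{Main obstacle.} The hardest step is showing that no other block in the sweep reintroduces a reciprocal dependence $\partial G_b/\partial x_a$ through the overlap structure. This requires careful bookkeeping of which chunks write to coordinate $b$. I would handle it by invoking the stated hypothesis that the reciprocal derivative is omitted, exactly as in the AR case of Proposition~\ref{prop:app_ar_nonconservative}.
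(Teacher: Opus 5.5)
Your proposal is correct and follows the paper's proof: the sweep's first-order displacement is $\sum_k g_k$, whose asymmetry is inherited from Proposition~\ref{prop:app_nonconservative_stitching}, and the ordered second-order cross terms $\sum_{i<j}\nabla g_j\,g_i$ are read as Lie brackets---your split into the gradient $\tfrac12\nabla\sum_{i<j}\langle g_i,g_j\rangle$ plus brackets, together with the quadratic commutator example $(A_jA_i-A_iA_j)\mathbf{x}$, makes the paper's ``not generally a gradient'' remark precise. One small fix: drop the clause ``or if they do, the reciprocal derivative is by assumption omitted.'' A neighbor's forward dependence $\partial\mu_{k',b}/\partial x_a$ is not an omitted reaction term, so $\partial G_b/\partial x_a=0$ must rest on the imported hypothesis that $b$ is assigned from values not depending on $x_a$ (this holds, e.g., when no chunk reads $x_a$ as a boundary condition), exactly as in the paper.
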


\begin{proof}
Write a small local update as $\mathcal{T}_k(\mathbf{x})=\mathbf{x}+\eta f_k(\mathbf{x})+O(\eta^2)$.  The first-order displacement of one sweep is
\begin{equation}
    \mathcal{T}_{\mathrm{int}}(\mathbf{x})-\mathbf{x}
    =
    \eta\sum_k f_k(\mathbf{x})+O(\eta^2).
\end{equation}
\edited{Each $f_k$ uses the local denoiser response to the current boundary condition while treating the boundary-condition source as externally provided. This drops the reciprocal reaction derivative. By Proposition~\ref{prop:app_nonconservative_stitching}, the Jacobian of $\sum_k f_k$ is generically asymmetric.}

At second order, the ordered composition contributes terms of the form
\begin{equation}
    \eta^2\sum_{j>i} Df_j(\mathbf{x})f_i(\mathbf{x}).
\end{equation}
Equivalently, the modified vector field contains Lie brackets $[f_j,f_i]=Df_i f_j-Df_j f_i$.  Lie brackets of gradient fields are not generally gradient fields unless the corresponding Hessians commute in a special way.  Thus interleaving is a useful proposal mechanism, but it does not restore the existence of a global scalar energy.
\end{proof}

\paragraph{Conclusion.}
\edited{CD remains useful as a local proposal because it preserves chunk-level multimodality. However, its update is not the gradient of a global compatibility energy because it treats boundary conditions as one-way inputs. ECD keeps the local proposal and adds the missing reaction correction analyzed next.}

\section{\edited{Why ECD Helps with CompDiffuser?}}
\label{app:exact_ecd_effectiveness}

\edited{The practical sampler uses CD's chunk-interleaved proposal, so the full sampler is not a pure conservative reverse-diffusion field. The theoretical claim is therefore conditional: given a CD proposal, the ECD step is a conservative energy correction that reduces unnecessary independence across chunks, and is closer to the oracle score under certain conditions.}

\subsection{Conservative Correction}
\label{app:proposal_plus_correction}

Let $\mathbf{z}$ denote the duplicated chunk state used by the interleaved sampler.  A CD sweep produces a proposal
\begin{equation}
    \widetilde{\mathbf{z}}
    =
    D_{\theta,t}^{\mathrm{CD}}(\mathbf{z}).
\end{equation}
ECD then applies an energy correction using the lifted bridge energy
\begin{equation}
    \mathcal{E}_t(\mathbf{z})
    =
    \sum_k \frac{1}{2\sigma_t^2}
    \left\|z^k-\boldsymbol{\mu}_k(O_k\mathbf{z},t)\right\|_{W_k}^{2}.
    \label{eq:app_lifted_energy}
\end{equation}
The exact correction is
\begin{equation}
    \mathbf{z}^{+}
    =
    \widetilde{\mathbf{z}}-
    \rho_t\,\Pi_c\big(\nabla \mathcal{E}_t(\widetilde{\mathbf{z}})\big),
    \label{eq:app_exact_correction}
\end{equation}
{where $\Pi_c$ is optional blockwise clipping. In the implementation, the bridge and reaction terms may be scaled separately; the proposition below states the clean case, and the same proof gives an alignment condition for unequal scales.}

\begin{proposition}[Exact ECD is an energy-improving correction to CD]
\label{prop:app_exact_energy_descent}
Assume $\mathcal{E}_t$ has $L_t$-Lipschitz gradient in a neighborhood of the CD proposal $\widetilde{\mathbf{z}}$.  If $\Pi_c$ is the identity and $0<\rho_t<2/L_t$, then the exact ECD correction in Eq.~\eqref{eq:app_exact_correction} satisfies
\begin{equation}
    \mathcal{E}_t(\mathbf{z}^{+})
    \le
    \mathcal{E}_t(\widetilde{\mathbf{z}})
    -
    \rho_t\left(1-\frac{L_t\rho_t}{2}\right)
    \left\|\nabla \mathcal{E}_t(\widetilde{\mathbf{z}})\right\|^2.
    \label{eq:app_exact_energy_descent}
\end{equation}
With clipping, the same conclusion holds whenever
\begin{equation}
    \left\langle \nabla\mathcal{E}_t(\widetilde{\mathbf{z}}),\Pi_c(\nabla\mathcal{E}_t(\widetilde{\mathbf{z}}))\right\rangle
    >
    \frac{L_t\rho_t}{2}
    \left\|\Pi_c(\nabla\mathcal{E}_t(\widetilde{\mathbf{z}}))\right\|^2.
\end{equation}
\end{proposition}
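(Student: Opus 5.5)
This is the standard descent lemma for an $L$-smooth function, applied at the CD proposal $\widetilde{\mathbf z}$ with step $\mathbf d=-\rho_t\,\Pi_c(\nabla\mathcal E_t(\widetilde{\mathbf z}))$. The plan is to prove a general one-step inequality for an arbitrary direction, and then specialize it twice: once to the unclipped case and once to the clipped case.

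\textbf{Step 1 (quadratic upper bound).} Write $\mathbf g:=\nabla\mathcal E_t(\widetilde{\mathbf z})$ and $\mathbf p:=\Pi_c(\mathbf g)$. By the fundamental theorem of calculus along the segment $\widetilde{\mathbf z}+s\mathbf d$, $s\in[0,1]$,
\begin{equation*}
\mathcal E_t(\widetilde{\mathbf z}+\mathbf d)-\mathcal E_t(\widetilde{\mathbf z})-\langle \mathbf g,\mathbf d\rangle
=\int_0^1\big\langle \nabla\mathcal E_t(\widetilde{\mathbf z}+s\mathbf d)-\mathbf g,\ \mathbf d\big\rangle\,ds .
\end{equation*}
Cauchy--Schwarz together with the $L_t$-Lipschitz bound bounds the integrand by $L_t s\|\mathbf d\|^2$. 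Integrating then gives
\begin{equation*}
\mathcal E_t(\widetilde{\mathbf z}+\mathbf d)\le \mathcal E_t(\widetilde{\mathbf z})+\langle\mathbf g,\mathbf d\rangle+\tfrac{L_t}{2}\|\mathbf d\|^2 .
\end{equation*}

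\textbf{Step 2 (specialize).} Substitute $\mathbf d=-\rho_t\mathbf p$:
\begin{equation*}
\mathcal E_t(\mathbf z^+)\le\mathcal E_t(\widetilde{\mathbf z})-\rho_t\Big(\langle\mathbf g,\mathbf p\rangle-\tfrac{L_t\rho_t}{2}\|\mathbf p\|^2\Big).
\end{equation*}
When $\Pi_c$ is the identity, $\mathbf p=\mathbf g$ and the bracket equals $(1-L_t\rho_t/2)\|\mathbf g\|^2$. This is exactly Eq.~\eqref{eq:app_exact_energy_descent}, and the decrease is nonnegative because $\rho_t<2/L_t$. With clipping, the stated alignment condition says precisely that the bracket is strictly positive, so the energy strictly decreases. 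I would read ``the same conclusion'' as this strict descent; the unclipped constant itself need not hold with a general $\Pi_c$.

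\textbf{Main obstacle.} The only delicate point is that the Lipschitz gradient is assumed only \emph{in a neighborhood} of $\widetilde{\mathbf z}$, while Step 1 needs the whole segment $[\widetilde{\mathbf z},\mathbf z^+]$ to lie inside that neighborhood. I would handle this by taking the neighborhood to be a ball that contains the step, i.e.\ by requiring $\rho_t\|\Pi_c(\mathbf g)\|$ to be at most its radius. This is automatically satisfied when clipping bounds the step norm, and otherwise I would state it as an implicit hypothesis. Convexity of the ball then guarantees the segment stays inside, and the argument above goes through verbatim.
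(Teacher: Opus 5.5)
Your proposal is correct and follows the paper's own proof: the standard $L_t$-smoothness (descent-lemma) inequality applied at $\widetilde{\mathbf{z}}$, first along $-\rho_t\nabla\mathcal{E}_t(\widetilde{\mathbf{z}})$ and then along $-\rho_t\Pi_c(\nabla\mathcal{E}_t(\widetilde{\mathbf{z}}))$ for the clipped case. You derive that inequality via the fundamental theorem of calculus and you require the segment $[\widetilde{\mathbf{z}},\mathbf{z}^+]$ to lie in the neighborhood. You also read ``the same conclusion'' as strict energy decrease in the clipped case. The paper leaves all three points implicit, so these are refinements of its argument, not a different route.
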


\begin{proof}
{The smoothness inequality gives}
\begin{equation}
    E(u-\rho g)\le E(u)-\rho\|g\|^2+\frac{L\rho^2}{2}\|g\|^2
\end{equation}
for $g=\nabla E(u)$.  Setting $u=\widetilde{\mathbf{z}}$ yields Eq.~\eqref{eq:app_exact_energy_descent}.  The clipped statement follows from the same smoothness inequality with direction $-\Pi_c(g)$.
\end{proof}

\subsection{\edited{Sufficient Conditions for ECD to Improve CD}}
\label{app:sufficient_conditions_exact_better}

{We further give two sufficient conditions. The first is energy-based and applies directly to the post-CD correction. The second is score-based and connects the correction to reverse-diffusion convergence.}

\begin{proposition}[Sublevel success improvement]
\label{prop:app_sublevel_success}
Let $G_\tau=\{\mathbf{z}:\mathcal{E}_t(\mathbf{z})\le \tau\}$ be a set of energy-consistent proposals.  Suppose the exact ECD correction satisfies
\begin{equation}
    \mathcal{E}_t(C(\widetilde{\mathbf{z}}))\le \mathcal{E}_t(\widetilde{\mathbf{z}})
\end{equation}
for every CD proposal $\widetilde{\mathbf{z}}$, where $C$ denotes the correction map.  Then
\begin{equation}
    \mathbb{P}\big(C(\widetilde{\mathbf{Z}})\in G_\tau\big)
    \ge
    \mathbb{P}\big(\widetilde{\mathbf{Z}}\in G_\tau\big).
\end{equation}
If, in addition, a set of positive probability satisfies
\begin{equation}
    \tau<\mathcal{E}_t(\widetilde{\mathbf{Z}})\le \tau+\Delta
    \quad\text{and}\quad
    \mathcal{E}_t(\widetilde{\mathbf{Z}})-\mathcal{E}_t(C(\widetilde{\mathbf{Z}}))\ge \Delta,
\end{equation}
then the inequality is strict.
\end{proposition}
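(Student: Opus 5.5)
The plan is to prove the inequality pointwise, as an event inclusion. Fix any realization $\widetilde{\mathbf{z}}$ of the CD proposal $\widetilde{\mathbf{Z}}$. If $\widetilde{\mathbf{z}}\in G_\tau$, then $\mathcal{E}_t(\widetilde{\mathbf{z}})\le\tau$. The monotonicity hypothesis then gives $\mathcal{E}_t(C(\widetilde{\mathbf{z}}))\le\mathcal{E}_t(\widetilde{\mathbf{z}})\le\tau$, so $C(\widetilde{\mathbf{z}})\in G_\tau$. This yields the inclusion of events
\begin{equation*}
\{\widetilde{\mathbf{Z}}\in G_\tau\}\subseteq\{C(\widetilde{\mathbf{Z}})\in G_\tau\}.
\end{equation*}
Monotonicity of probability then gives the first inequality. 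Measurability is the only technical point. I would assume $\mathcal{E}_t$ is continuous (it is built from the smooth denoiser) and $C$ is Borel, so both events are measurable.

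For strictness, I would write
\begin{equation*}
\mathbb{P}(C(\widetilde{\mathbf{Z}})\in G_\tau)=\mathbb{P}(\widetilde{\mathbf{Z}}\in G_\tau)+\mathbb{P}\big(C(\widetilde{\mathbf{Z}})\in G_\tau,\ \widetilde{\mathbf{Z}}\notin G_\tau\big),
\end{equation*}
which is valid because of the inclusion above. It then suffices to show the last term is positive. Let $A$ be the positive-probability event on which $\tau<\mathcal{E}_t(\widetilde{\mathbf{Z}})\le\tau+\Delta$ and the decrease is at least $\Delta$. On $A$ we have $\widetilde{\mathbf{Z}}\notin G_\tau$, and
\begin{equation*}
\mathcal{E}_t(C(\widetilde{\mathbf{Z}}))\le\mathcal{E}_t(\widetilde{\mathbf{Z}})-\Delta\le\tau,
\end{equation*}
so $C(\widetilde{\mathbf{Z}})\in G_\tau$. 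Hence $A$ is contained in the difference event, and the difference has probability at least $\mathbb{P}(A)>0$.

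Nothing here is a genuine obstacle; the argument is elementary once it is phrased as event inclusion. The one place to be careful is where the hypothesis comes from. For the exact correction with step size in the admissible range, the per-proposal decrease follows from Proposition~\ref{prop:app_exact_energy_descent}, which requires the local Lipschitz constant $L_t$ to hold uniformly over the support of the proposals. The statement, however, takes the decrease as an assumption, so I would remark on this connection rather than prove it.
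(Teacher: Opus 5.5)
Your proof is correct and follows the paper's argument. The paper likewise uses monotonicity to get $G_\tau\subseteq C^{-1}(G_\tau)$ for the first inequality, and obtains strictness by noting that the positive-probability shell outside $G_\tau$ (energy in $(\tau,\tau+\Delta]$ with decrease at least $\Delta$) is mapped into $G_\tau$; your disjoint-difference decomposition and measurability remark just make explicit what the paper leaves implicit.
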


\begin{proof}
If $\widetilde{\mathbf{z}}\in G_\tau$, then energy monotonicity implies $C(\widetilde{\mathbf{z}})\in G_\tau$.  Thus $C^{-1}(G_\tau)$ contains $G_\tau$, proving the first inequality.  The strict statement follows because the specified positive-probability shell outside $G_\tau$ is mapped into $G_\tau$.
\end{proof}

{This result does not assume that low energy is exactly equivalent to task success. It guarantees that ECD improves the probability of satisfying the energy-consistency certificate. When the bridge energy is a useful feasibility surrogate, this certificate becomes a mechanism for improving success rate.}

For the score-based condition, let $s_\star$ be the oracle score of the induced bridge-energy model.  Let $s_{\mathrm{ex}}$ be the exact ECD score/correction direction and $s_{\mathrm{CD}}$ be the CD direction using the same local denoiser but without the reaction.  Define
\begin{equation}
    R:=s_{\mathrm{ex}}-s_{\mathrm{CD}},
    \qquad
    e:=s_{\mathrm{ex}}-s_\star.
\end{equation}
Then $s_{\mathrm{CD}}-s_\star=e-R$.

\begin{theorem}[Score-error condition for exact ECD to outperform CD]
\label{thm:app_score_condition_exact_better}
At a fixed diffusion time and state, exact ECD has smaller squared score error than CD if and only if
\begin{equation}
    \|e\|^2<\|e-R\|^2
    \quad\Longleftrightarrow\quad
    \langle e,R\rangle<\frac{1}{2}\|R\|^2.
    \label{eq:app_score_condition_exact_better}
\end{equation}
A sufficient condition is $\|e\|<\frac{1}{2}\|R\|$.
\end{theorem}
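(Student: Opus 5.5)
The argument is a direct algebraic expansion in the inner-product space where the scores live; no analytic input is needed. By the definitions $R=s_{\mathrm{ex}}-s_{\mathrm{CD}}$ and $e=s_{\mathrm{ex}}-s_\star$, the CD error is $s_{\mathrm{CD}}-s_\star=e-R$. The squared score errors of exact ECD and CD at the fixed time and state are therefore $\|e\|^2$ and $\|e-R\|^2$, and the first ``if and only if'' in Eq.~\eqref{eq:app_score_condition_exact_better} is simply a restatement of ``ECD has smaller squared error than CD.''

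For the second equivalence, expand
\[
\|e-R\|^2=\|e\|^2-2\langle e,R\rangle+\|R\|^2 .
\]
Thus $\|e\|^2<\|e-R\|^2$ holds exactly when $0<\|R\|^2-2\langle e,R\rangle$, that is, when $\langle e,R\rangle<\tfrac12\|R\|^2$. Each step is reversible, so the equivalence holds in both directions.

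For the sufficient condition, apply Cauchy--Schwarz: $\langle e,R\rangle\le\|e\|\,\|R\|$. If $\|e\|<\tfrac12\|R\|$, then $R\neq 0$ necessarily, since otherwise $\|e\|<0$ is impossible. Multiplying by $\|R\|>0$ gives $\|e\|\,\|R\|<\tfrac12\|R\|^2$. Combining the two inequalities yields $\langle e,R\rangle<\tfrac12\|R\|^2$, and the first part then gives the claim.

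I do not expect a real obstacle here. The only point needing care is the strictness in the sufficient condition, which requires $R\neq0$; as shown above, this is forced by the hypothesis. I would also add a short remark on interpretation. $R$ is exactly the boundary-reaction contribution $\sigma_t^{-2}\sum_k \mathbf{S}_k^\top\mathbf{O}_k^\top J_{O,k}^\top r_k$ from Eq.~\eqref{eq:app_exact_reaction}. The condition therefore says that ECD wins whenever its residual error relative to the oracle bridge score is small compared with the reaction term that CD drops.
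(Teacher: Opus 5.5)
Your proof is correct and follows the paper's argument: the paper likewise expands $\|e-R\|^2-\|e\|^2=\|R\|^2-2\langle e,R\rangle$, reads off both equivalences, and treats the sufficient condition as immediate. Your explicit Cauchy--Schwarz step, together with the observation that $\|e\|<\tfrac12\|R\|$ forces $R\neq 0$, supplies the detail the paper leaves out.
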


\begin{proof}
Compute
\begin{equation}
    \|s_{\mathrm{CD}}-s_\star\|^2-\|s_{\mathrm{ex}}-s_\star\|^2
    =
    \|e-R\|^2-\|e\|^2
    =
    \|R\|^2-2\langle e,R\rangle.
\end{equation}
The stated equivalence and sufficient condition follow immediately.
\end{proof}

\paragraph{Connection to diffusion convergence.}
{Standard reverse-diffusion error bounds depend on a weighted integral or sum of squared score errors. If Eq.~\eqref{eq:app_score_condition_exact_better} holds on average along the reverse process, exact ECD has a smaller effective score-error term than CD for the induced bridge model. This statement does not identify the bridge model with the true data distribution. It states that, for the energy model defined by the local denoisers, CD carries an additional structural error from the dropped reaction, while exact ECD removes that error up to learned-model error.}

\paragraph{Oracle sufficient condition.}
The condition above can be related to local denoiser accuracy.  Let $\mu_k^\star,J_{O,k}^\star,r_k^\star$ denote the oracle local mean, \edited{boundary Jacobian}, and residual.  Let $m_k^\star=(J_{O,k}^\star)^\top r_k^\star$ and $m_k=J_{O,k}^\top r_k$.  If
\begin{equation}
\sum_k\left[(1+\|J_{O,k}\|)\|W_k\|\,\|\mu_k-\mu_k^\star\|
+\|J_{O,k}-J_{O,k}^\star\|\,\|r_k^\star\|\right]
<
\frac{1}{2}\left\|\sum_k O_k^\top m_k\right\|,
\label{eq:app_oracle_sufficient_condition}
\end{equation}
\edited{then exact ECD is closer to the oracle noisy-mean score than CD. This formalizes the intuition that exact ECD helps when the local chunk mean and its boundary-condition dependence are accurate enough relative to the reaction term that CD drops.}

\section{\texorpdfstring{\edited{Markov Approximation to the Boundary Reaction Term}}{Markov Approximation to the Boundary Reaction Term}}
\label{app:approx_close_exact}

{Exact ECD requires the neural VJP $J_{O,k}^\top r_k$. Approximate ECD replaces it with a Markov reaction computed by a sparse linear solve. This section analyzes the approximation error and clarifies when the approximation should be trusted.}

\subsection{Implicit Markov Reaction}
\label{app:implicit_markov_reaction}

For chunk $k$, introduce a local Markov surrogate energy
\begin{equation}
    E_{k}^{\mathrm{ms}}(y,c,t)
    =
    \sum_i E_{\phi,t}(y_i)
    +
    \sum_i E_{\psi,t}(y_i,y_{i+1})
    +
    E_{\partial,t}(y,c),
    \label{eq:app_markov_surrogate}
\end{equation}
where $y$ denotes predicted/core chunk coordinates and $c$ denotes \edited{boundary conditions}.  Let
\begin{equation}
    H_k(t):=\nabla_{yy}^{2}E_k^{\mathrm{ms}}(y,c,t),
    \qquad
    G_k(t):=\nabla_{yc}^{2}E_k^{\mathrm{ms}}(y,c,t).
\end{equation}
If the surrogate mean is defined by the stationarity condition $\nabla_yE_k^{\mathrm{ms}}(y,c,t)=0$, implicit differentiation gives
\begin{equation}
    \frac{\partial y}{\partial c}
    =
    -H_k(t)^{-1}G_k(t).
\end{equation}
Therefore the exact reaction message is approximated by
\begin{equation}
    \widetilde m_k(t)=-G_k(t)^\top u_k(t),
    \qquad
    H_k(t)u_k(t)=r_k(t).
    \label{eq:app_markov_message}
\end{equation}
{When $E_\psi$ is first-order Markov, $H_k$ is block-tridiagonal, so the solve is linear in the chunk horizon for fixed state dimension.}

\subsection{Single-Step Reaction Error}
\label{app:single_step_reaction_error}

\begin{theorem}[Markov reaction approximation error]
\label{thm:app_markov_reaction_error}
Let $H_k^\star,G_k^\star$ be the ideal implicit matrices satisfying $J_{O,k}^\star=-(H_k^\star)^{-1}G_k^\star$.  Assume
\begin{equation}
    \lambda_{\min}(H_k^\star)\ge m_k^\star(t)>0,
    \qquad
    \|H_k-H_k^\star\|\le \delta_H^k(t)<m_k^\star(t),
    \qquad
    \|G_k-G_k^\star\|\le \delta_G^k(t).
\end{equation}
Let $m_k=J_{O,k}^{\star\top}r_k$ and $\widetilde m_k=-G_k^\top H_k^{-1}r_k$.  Then
\begin{equation}
\|\widetilde m_k-m_k\|
\le
\left[
\frac{\delta_G^k(t)}{m_k^\star(t)}
+
\frac{\delta_H^k(t)\|G_k(t)\|}{m_k^\star(t)(m_k^\star(t)-\delta_H^k(t))}
\right]
\|r_k(t)\|.
\label{eq:app_markov_msg_error_no_sigma}
\end{equation}
If the bridge energy has VP scaling $\lambda(2\sigma_t^2)^{-1}\|z^k-\mu_k\|_{W_k}^{2}$ and $0<w_{\min}^k\le w_j^k\le w_{\max}^k$, then
\begin{equation}
\begin{aligned}
\|\widetilde m_k-m_k\|
&\le
\sigma_t\frac{w_{\max}^k}{\sqrt{w_{\min}^k}}
\sqrt{\frac{2}{\lambda}}
\left[
\frac{\delta_G^k(t)}{m_k^\star(t)}
+
\frac{\delta_H^k(t)\|G_k(t)\|}{m_k^\star(t)(m_k^\star(t)-\delta_H^k(t))}
\right]
\sqrt{E_{\mathrm{br}}^k(\mathbf{z},t)}.
\end{aligned}
\label{eq:app_markov_msg_error_sigma}
\end{equation}
\end{theorem}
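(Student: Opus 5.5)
The bound in Theorem~\ref{thm:app_markov_reaction_error} is a matrix-perturbation estimate. I would split the message error with a single add-and-subtract step and then control each piece with standard inverse-perturbation bounds. Using $J_{O,k}^\star=-(H_k^\star)^{-1}G_k^\star$ and the symmetry of the Hessians $H_k,H_k^\star$, the exact message is $m_k=-G_k^{\star\top}(H_k^\star)^{-1}r_k$. Hence
\begin{equation*}
\widetilde m_k-m_k
= -(G_k-G_k^\star)^\top (H_k^\star)^{-1} r_k
\;-\; G_k^\top\big(H_k^{-1}-(H_k^\star)^{-1}\big) r_k .
\end{equation*}
I deliberately keep $G_k$ (not $G_k^\star$) in the second term, so that $\|G_k(t)\|$ appears in the bound exactly as stated.

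For the first term I would use $\lambda_{\min}(H_k^\star)\ge m_k^\star$, so that $\|(H_k^\star)^{-1}\|\le 1/m_k^\star$. This gives the contribution $\delta_G^k\|r_k\|/m_k^\star$.

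For the second term I would write the resolvent identity $H_k^{-1}-(H_k^\star)^{-1}=H_k^{-1}(H_k^\star-H_k)(H_k^\star)^{-1}$. I also need a bound on $\|H_k^{-1}\|$. By Weyl's inequality, $\lambda_{\min}(H_k)\ge m_k^\star-\delta_H^k>0$, so $H_k$ is invertible and $\|H_k^{-1}\|\le 1/(m_k^\star-\delta_H^k)$. This is the one place needing care: it assumes $H_k$ is symmetric, which holds because it is a Hessian of the surrogate energy. Failing that, I would fall back to a Neumann-series argument, which gives the same bound since $\|(H_k^\star)^{-1}(H_k-H_k^\star)\|<1$. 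Multiplying the norms gives $\|G_k\|\,\delta_H^k\|r_k\|/(m_k^\star(m_k^\star-\delta_H^k))$, and the triangle inequality yields Eq.~\eqref{eq:app_markov_msg_error_no_sigma}.

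For the $\sigma_t$-scaled version Eq.~\eqref{eq:app_markov_msg_error_sigma}, it only remains to bound $\|r_k\|$ by the bridge energy. Write $v=z^k-\mu_k$, so that $r_k=W_kv$ and $E_{\mathrm{br}}^k=\lambda(2\sigma_t^2)^{-1}v^\top W_kv\ge \lambda(2\sigma_t^2)^{-1}w_{\min}^k\|v\|^2$. Then $\|r_k\|\le w_{\max}^k\|v\|\le w_{\max}^k\sqrt{2\sigma_t^2E_{\mathrm{br}}^k/(\lambda w_{\min}^k)}$, which equals $\sigma_t\,(w_{\max}^k/\sqrt{w_{\min}^k})\sqrt{2/\lambda}\,\sqrt{E_{\mathrm{br}}^k}$. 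Substituting this into the first bound gives the second inequality.

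The main obstacle is only the bookkeeping. One must check that the sign conventions and transposes match $J_{O,k}^{\star\top}=-G_k^{\star\top}(H_k^\star)^{-1}$, and that the invertibility of $H_k$ follows from $\delta_H^k<m_k^\star$.
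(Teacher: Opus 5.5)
Your proposal is correct and follows essentially the same route as the paper. It uses the same add-and-subtract of $-G_k^\top (H_k^\star)^{-1} r_k$, the same resolvent identity with $\|H_k^{-1}\|\le 1/(m_k^\star-\delta_H^k)$, and the same residual bound $\|r_k\|\le w_{\max}^k\|z^k-\mu_k\|$ combined with $E_{\mathrm{br}}^k\ge \lambda(2\sigma_t^2)^{-1}w_{\min}^k\|z^k-\mu_k\|^2$. You also state explicitly the symmetry of $H_k$ and $H_k^\star$, which the paper uses without comment both to write $m_k=-G_k^{\star\top}(H_k^\star)^{-1}r_k$ and to get $\|(H_k^\star)^{-1}\|\le 1/m_k^\star$.
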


\begin{proof}
Write
\begin{equation}
    m_k=-(G_k^\star)^\top(H_k^\star)^{-1}r_k,
    \qquad
    \widetilde m_k=-G_k^\top H_k^{-1}r_k.
\end{equation}
Add and subtract $-G_k^\top(H_k^\star)^{-1}r_k$.  The $G$-error term is bounded by
\begin{equation}
    \delta_G^k(t)\|(H_k^\star)^{-1}\|\|r_k\|
    \le
    \frac{\delta_G^k(t)}{m_k^\star(t)}\|r_k\|.
\end{equation}
For the inverse error, use the resolvent identity
\begin{equation}
    H_k^{-1}-(H_k^\star)^{-1}
    =
    H_k^{-1}(H_k^\star-H_k)(H_k^\star)^{-1}
\end{equation}
and
\begin{equation}
    \|H_k^{-1}\|
    \le
    \frac{1}{m_k^\star(t)-\delta_H^k(t)}.
\end{equation}
Combining these bounds gives Eq.~\eqref{eq:app_markov_msg_error_no_sigma}.  The VP-scaled statement follows from
\begin{equation}
    \|r_k(t)\|
    \le
    \frac{w_{\max}^k}{\sqrt{w_{\min}^k}}
    \sqrt{\frac{2\sigma_t^2}{\lambda}}
    \sqrt{E_{\mathrm{br}}^k(\mathbf{z},t)}.
\end{equation}
\end{proof}

\paragraph{Implication.} \edited{The key implication is that the approximation gap shrinks with $\sigma_t$ along bounded-energy denoising paths. Thus the Markov reaction is most reliable in the low-noise regime, where local modes and boundary dependencies are sharpest.}

\subsection{Approximate ECD versus Exact ECD and CD}
\label{app:approx_vs_exact_cd}

Let
\begin{equation}
    R_{\mathrm{ex}}=\sum_k O_k^\top m_k,
    \qquad
    R_{\mathrm{ap}}=\sum_k O_k^\top \widetilde m_k.
\end{equation}
Then
\begin{equation}
    \|R_{\mathrm{ap}}-R_{\mathrm{ex}}\|
    \le
    \sum_k\|O_k\|\,\|\widetilde m_k-m_k\|.
    \label{eq:app_reaction_total_gap}
\end{equation}
By contrast, bridge-only/CD drop $R_{\mathrm{ex}}$ entirely.  Hence approximate ECD is closer to exact ECD than bridge-only whenever
\begin{equation}
    \sum_k\|O_k\|\,\|\widetilde m_k-m_k\|
    <
    \|R_{\mathrm{ex}}\|.
    \label{eq:app_approx_beats_bridge_condition}
\end{equation}
{This gives a useful diagnostic: approximate ECD is preferable once its Markov error is smaller than the reaction that CD omits.}

\begin{proposition}[Approximate descent with relative reaction error]
\label{prop:app_approx_descent}
\edited{Let $g=\nabla\mathcal{E}_t(\widetilde{\mathbf{z}})$ be the exact ECD gradient, let $\widehat g$ be the approximate ECD gradient, and define $p=\Pi_c(\widehat g)$. Suppose the clipping map is alignment-preserving,}
\edited{\begin{equation}
    \langle \widehat g,p\rangle \ge \|p\|^2,
    \label{eq:clip_alignment}
\end{equation}}
\edited{which holds for standard norm clipping and coordinatewise clipping. Suppose further that}
\edited{\begin{equation}
    \|\widehat g-g\|\le \kappa\|p\|,
    \qquad \kappa<1,
\end{equation}}
\edited{and $\nabla\mathcal{E}_t$ is $L_t$-Lipschitz. Then the update $\mathbf{z}^+=\widetilde{\mathbf{z}}-\eta p$ satisfies}
\edited{\begin{equation}
    \mathcal{E}_t(\mathbf{z}^+)
    \le
    \mathcal{E}_t(\widetilde{\mathbf{z}})
    -
    \eta\left(1-\kappa-\frac{L_t\eta}{2}\right)
    \|p\|^2.
\end{equation}}
\edited{Therefore approximate ECD is an energy-decreasing correction whenever $\eta<2(1-\kappa)/L_t$. Under Theorem~\ref{thm:app_markov_reaction_error}, $\kappa$ decreases with $\sigma_t$ when the bridge energy and curvature mismatch remain controlled.}
\end{proposition}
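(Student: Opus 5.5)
The plan is to run the standard descent-lemma argument along the approximate direction $p$, and to split the first-order term into an exact-gradient part and an error part. Since $\nabla\mathcal{E}_t$ is $L_t$-Lipschitz near $\widetilde{\mathbf{z}}$, the quadratic upper bound used in the proof of Proposition~\ref{prop:app_exact_energy_descent} applies to the step $\mathbf{z}^+=\widetilde{\mathbf{z}}-\eta p$. It gives
\[
\mathcal{E}_t(\mathbf{z}^+)\le \mathcal{E}_t(\widetilde{\mathbf{z}})-\eta\langle g,p\rangle+\frac{L_t\eta^2}{2}\|p\|^2 .
\]
It therefore suffices to lower-bound $\langle g,p\rangle$ by $(1-\kappa)\|p\|^2$.

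To get this bound, write $g=\widehat g-(\widehat g-g)$, so that
\[
\langle g,p\rangle=\langle \widehat g,p\rangle-\langle \widehat g-g,p\rangle .
\]
The first term is at least $\|p\|^2$ by the alignment property \eqref{eq:clip_alignment}. The second term is bounded by Cauchy--Schwarz and the relative error assumption:
\[
|\langle \widehat g-g,p\rangle|\le \|\widehat g-g\|\,\|p\|\le\kappa\|p\|^2 .
\]
Hence $\langle g,p\rangle\ge(1-\kappa)\|p\|^2$. Substituting this into the descent inequality yields exactly the claimed bound
\[
\mathcal{E}_t(\mathbf{z}^+)\le\mathcal{E}_t(\widetilde{\mathbf{z}})-\eta\Bigl(1-\kappa-\tfrac{L_t\eta}{2}\Bigr)\|p\|^2 .
\]
The coefficient $1-\kappa-\tfrac{L_t\eta}{2}$ is positive exactly when $\eta<2(1-\kappa)/L_t$, which gives the energy-decrease statement.

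Two side claims remain.

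\textbf{Alignment for clipping.} For norm clipping, $p=\min(1,c/\|\widehat g\|)\,\widehat g=\alpha\widehat g$ with $\alpha\in(0,1]$, so $\langle\widehat g,p\rangle=\alpha^{-1}\|p\|^2\ge\|p\|^2$. For coordinatewise clipping, each coordinate satisfies $p_i=\operatorname{sign}(\widehat g_i)\min(|\widehat g_i|,c)$, so $\widehat g_ip_i\ge p_i^2$, and summing over coordinates gives the claim. Blockwise norm clipping follows in the same way, block by block.

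\textbf{Dependence of $\kappa$ on $\sigma_t$.} The exact and approximate gradients differ only through the reaction term, so
\[
\widehat g-g=\sigma_t^{-2}\sum_k \mathbf{S}_k^\top O_k^\top(\widetilde m_k-m_k)
\]
(up to sign). The size of this difference is controlled by \eqref{eq:app_reaction_total_gap} together with Theorem~\ref{thm:app_markov_reaction_error}.

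The main subtlety is this last remark. Two things must be tracked. First, the $\sigma_t^{-2}$ prefactor must be carried consistently, so that the ratio $\|\widehat g-g\|/\|p\|$ inherits the relative factor from Theorem~\ref{thm:app_markov_reaction_error}. Second, $\|p\|$ must not degenerate when clipping is active. I would state this part only qualitatively, under the stated assumptions that the bridge energy and the curvature mismatches $\delta_H,\delta_G$ remain controlled, as the proposition does. The core inequality itself is routine.
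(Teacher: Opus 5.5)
Your proof is correct and follows the paper's argument: the smoothness (descent-lemma) inequality along $-\eta p$, the split $g=\widehat g-(\widehat g-g)$, and the alignment property plus Cauchy--Schwarz to get $\langle g,p\rangle\ge(1-\kappa)\|p\|^2$. You also verify the alignment property for norm and coordinatewise clipping, and you caution that the $\kappa$-versus-$\sigma_t$ remark needs the $\sigma_t^{-2}$ prefactor tracked and a non-degenerate $\|p\|$ under clipping; the paper only asserts these points, so both additions go slightly beyond it.
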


\begin{proof}
\edited{By smoothness,}
\edited{\begin{equation}
\mathcal{E}_t(\widetilde{\mathbf{z}}-\eta p)
\le
\mathcal{E}_t(\widetilde{\mathbf{z}})
-\eta\langle g,p\rangle
+\frac{L_t\eta^2}{2}\|p\|^2.
\end{equation}}
\edited{Write $g=\widehat g-(\widehat g-g)$. Using Eq.~\eqref{eq:clip_alignment} and Cauchy's inequality gives}
\edited{\begin{equation}
\langle g,p\rangle
\ge
\|p\|^2-\|\widehat g-g\|\|p\|
\ge
(1-\kappa)\|p\|^2.
\end{equation}}
\edited{Substituting this bound into the smoothness inequality yields the result.}
\end{proof}

\subsection{Multi-Step Perturbation from Exact ECD}
\label{app:reaction_error_bound}

Let $\Phi_i^{\mathrm{ex}}$ and $\Phi_i^{\mathrm{ap}}$ be one exact and approximate ECD update at DDIM step $i$.  Suppose $\Phi_i^{\mathrm{ex}}$ is $M_i$-Lipschitz and
\begin{equation}
    \|\Phi_i^{\mathrm{ap}}(\mathbf{z})-\Phi_i^{\mathrm{ex}}(\mathbf{z})\|
    \le
    \eta_i\epsilon_i(t_i).
\end{equation}
If both samplers start from the same initial noise, then after $N$ steps,
\begin{equation}
    \|\mathbf{z}_0^{\mathrm{ap}}-\mathbf{z}_0^{\mathrm{ex}}\|
    \le
    \sum_{i=1}^{N}\left(\prod_{j<i}M_j\right)\eta_i\epsilon_i(t_i).
    \label{eq:app_multistep_approx_exact}
\end{equation}
If $M_j\le 1+L_j\eta_j$, then
\begin{equation}
    \|\mathbf{z}_0^{\mathrm{ap}}-\mathbf{z}_0^{\mathrm{ex}}\|
    \le
    \sum_i \eta_i\epsilon_i(t_i)\exp\left(\sum_{j<i}L_j\eta_j\right).
\end{equation}
{Combining this with Eq.~\eqref{eq:app_markov_msg_error_sigma} gives $\epsilon_i(t_i)=O(\sigma_{t_i}\sqrt{E_{\mathrm{br}}(\mathbf{z}_i,t_i)})$ under the Markov assumptions. This motivates late-time reaction gates: high-noise corrections are weakly trusted, while low-noise corrections are both more informative and better approximated.}

\section{Experimental Details}
\label{app:additional_details}

{This appendix describes the implementation choices used in our OGBench stitching experiments. We follow the CompDiffuser (CD) evaluation protocol whenever possible: all methods use the same local planner checkpoints, candidate batch size, DDIM grid, boundary-mismatch ranking, and trajectory-blending post-processing, and replanning rule. \edited{Note that the main difference between CD and ECD is the additional bridge/boundary-reaction correction applied after each CD-style local denoising proposal.}}

\subsection{OGBench Evaluation Protocol}
\label{app:ogbench_config_table}

{Table~\ref{tab:ogbench_config} summarizes the evaluation settings used across the 13 OGBench stitching environments. The ``training horizon'' column is the local chunk horizon $l$ used by the planner. The planning horizon $L$ is the total stitched horizon induced by the default number of chunks,}
\begin{equation}
    L = l + K(l-o),
\end{equation}
{where $o=l-d$ is the overlap length and $K+1$ is the number of chunks. The 15D AntMaze variants use the same underlying AntMaze environments but plan in the first 15 observation dimensions.}
{
\begin{table*}[h]
\centering
\makeblue
\footnotesize
\setlength{\tabcolsep}{2.7pt}
\begin{tabular}{l c c c c c c}
\toprule
\multirow{2}{*}{\textbf{OGBench environment}} & \textbf{Training horizon} & \textbf{Planning horizon} & \textbf{Overlap} & \textbf{\# Chunks} & \textbf{Max steps} & \textbf{Replan thres.} \\
 & $l$ & $L$ & $o=l-d$ & $K+1$ & $N_{\max}$ & $\tau$ \\
\midrule
\texttt{pointmaze-medium-stitch-v0} & 160 & 368 & 56 & 3 & 1000 & 1 \\
\texttt{pointmaze-large-stitch-v0} & 160 & 680 & 56 & 6 & 1000 & 1 \\
\texttt{pointmaze-giant-stitch-v0} & 160 & 888 & 56 & 8 & 1000 & 1 \\
\midrule
\texttt{antmaze-medium-stitch-v0} & 160 & 368 & 56 & 3 & 1000 & 4 \\
\texttt{antmaze-large-stitch-v0} & 160 & 680 & 56 & 6 & 1000 & 4 \\
\texttt{antmaze-giant-stitch-v0} & 160 & 992 & 56 & 9 & 2000 & 4 \\
\midrule
\texttt{humanoidmaze-medium-stitch-v0} & 336 & 960 & 128 & 4 & 5000 & 10 \\
\texttt{humanoidmaze-large-stitch-v0} & 336 & 1376 & 128 & 6 & 5000 & 10 \\
\texttt{humanoidmaze-giant-stitch-v0} & 336 & 2416 & 128 & 11 & 8000 & 10 \\
\midrule
\texttt{antsoccer-arena-stitch-v0} & 160 & 576 & 56 & 5 & 5000 & 4 \\
\texttt{antsoccer-medium-stitch-v0} & 160 & 680 & 56 & 6 & 5000 & 6 \\
\midrule
\edited{\texttt{antmaze-large-stitch-v0-o15d}} & \edited{160} & \edited{680} & \edited{56} & \edited{6} & \edited{1000} & \edited{4} \\
\edited{\texttt{antmaze-giant-stitch-v0-o15d}} & \edited{160} & \edited{992} & \edited{56} & \edited{9} & \edited{2000} & \edited{4} \\
\bottomrule
\end{tabular}
\vspace{0.5em}
\caption{\textbf{OGBench stitching evaluation configuration.} {The table reports the local planner horizon, total stitched planning horizon, overlap length, default number of chunks, rollout horizon, and adaptive replanning threshold. The o15d AntMaze variants use the same underlying AntMaze environments but plan in the first 15 observation dimensions.}}
\label{tab:ogbench_config}
\end{table*}
}

\subsection{Implementation Details}
\label{app:implementation_details}

\paragraph{Planner models.}
{Planner architectures are determined by the environment family and planner state dimension. \edited{Two-dimensional PointMaze, AntMaze, and HumanoidMaze use a one-dimensional U-Net trajectory denoiser with separate boundary-condition encoders. AntSoccer and AntMaze-o15d use a DiT trajectory denoiser with DiT boundary-condition encoders, which are more stable for high-dimensional planner states. All planners use the same conditioning convention: a local chunk receives boundary conditions from either fixed start/goal inpainting tokens or neighboring chunks.} Table~\ref{tab:planner_network} summarizes the planner architectures.}

\begin{table*}[h]
\centering
\makeblue
\footnotesize
\setlength{\tabcolsep}{4pt}
\begin{tabular}{l c l l}
\toprule
\textbf{Environment family} & \textbf{Planner dim.} & \textbf{Trajectory denoiser} & \textbf{\edited{Boundary encoder}} \\
\midrule
\texttt{pointmaze-*} & 2D
& U-Net, base dim 128, dim mults $(1,2,4,8)$
& \edited{CNN boundary encoders, out dim 256} \\
\texttt{antmaze-*} & 2D
& U-Net, base dim 128, dim mults $(1,2,4,8)$
& \edited{CNN boundary encoders, out dim 256} \\
\texttt{humanoidmaze-*} & 2D
& U-Net, base dim 128, dim mults $(1,2,4,8)$
& \edited{CNN boundary encoders, out dim 256} \\
\texttt{antsoccer-*} & 17D
& DiT, hidden 768, depth 16, 12 heads
& \edited{DiT boundary encoders, hidden 384, depth 8} \\
\texttt{antmaze-*-o15d} & 15D
& DiT, hidden 768, depth 16, 12 heads
& \edited{DiT boundary encoders, hidden 384, depth 8} \\
\bottomrule
\end{tabular}
\vspace{0.5em}
\caption{\textbf{Planner architectures.} {The planner architecture is fixed by environment family and planner dimensionality. The o15d AntMaze variants use the same underlying AntMaze environments but plan in the first 15 observation dimensions. All CD and ECD comparisons use the same planner checkpoint within each environment.}}
\label{tab:planner_network}
\end{table*}

\paragraph{Inverse-dynamics models.}
{PointMaze is executed with the standard PD controller and does not use inverse dynamics. AntMaze, HumanoidMaze, AntSoccer, and AntMaze-o15d use an MLP inverse-dynamics model that maps the current full observation and the next planned waypoint to an action. The inverse-dynamics model is shared across CD and ECD evaluations for a fixed environment and checkpoint, so ECD changes only the planner output. Table~\ref{tab:invdyn_network} summarizes the controller and inverse-dynamics models.}

\begin{table*}[h]
\centering
\makeblue
\footnotesize
\setlength{\tabcolsep}{4pt}
\begin{tabular}{l l l}
\toprule
\textbf{Environment family} & \textbf{Model} & \textbf{{Implementation}} \\
\midrule
\texttt{pointmaze-*}
& PD controller
& Standard PD controller with PD gain $=5.0$. \\
\texttt{antmaze-*} 
& MLP inverse dynamics
& Hidden dims $(512,512,512)$. \\
\texttt{humanoidmaze-*}
& MLP inverse dynamics
& Hidden dims $(512,1024,1024,512,256)$. \\
\texttt{antsoccer-arena-stitch-v0}
& MLP inverse dynamics
& Hidden dims $(512,512,512)$. \\
\texttt{antsoccer-medium-stitch-v0}
& MLP inverse dynamics
& Hidden dims $(512,1024,1024,512)$. \\
\texttt{antmaze-*-o15d}
& MLP inverse dynamics
& AntMaze inverse-dynamics architecture with 15D planned input. \\
\bottomrule
\end{tabular}
\vspace{0.5em}
\caption{\textbf{Controller and inverse-dynamics models.} {PointMaze uses a PD controller. All other environments use learned inverse dynamics. For each environment, CD and ECD use the same inverse-dynamics checkpoint, so the comparison isolates the effect of the planner and compositional denoising rule.}}
\label{tab:invdyn_network}
\end{table*}

\paragraph{DDIM and guidance defaults.}
{Unless otherwise specified, all diffusion-planning methods use $50$ DDIM inference steps, $\eta=1.0$, classifier-free guidance weight $2.0$, and candidate batch size $b=40$ for full rollout evaluation. We keep these values fixed between CD and ECD. The environment-specific number of chunks is listed in Table~\ref{tab:ogbench_config}.}

\begin{algorithm}[H]
\caption{CD-style interleaved DDIM with ECD correction}
\label{alg:impl_ecd_ddim}
\begin{algorithmic}[1]
\STATE Initialize chunk states $z_T^0,\ldots,z_T^K\sim\mathcal N(0,I)$.
\STATE Clamp the start state in chunk $0$ and the goal state in chunk $K$.
\FOR{DDIM timesteps $t=T,T-\Delta,\ldots,0$}
    \FOR{chunks $k=0,\ldots,K$ in interleaved order}
        \STATE \edited{Build the local boundary condition $O_k\mathbf z_t$ from fixed start/goal tokens or neighboring boundary variables.}
        \STATE Run the local CD DDIM step to obtain $\tilde z_{t-\Delta}^k$ and $\hat z_{0,\theta}^k$.
        \STATE \edited{Compute $\mu_k=\sqrt{\bar\alpha_t}\hat z_{0,\theta}^k$ and residual $r_k=W_k(P_k z_t^k-\mu_k)$.}
        \STATE \edited{Bridge correction: $\Delta_k\leftarrow-\eta_b(t)\,P_k^\top\Pi_c(r_k)$.}
        \IF{using approximate ECD}
            \STATE \edited{Solve the Markov system $H_k u_k=r_k$ and compute the boundary-space message $\tilde m_k=-G_k^\top u_k$.}
            \STATE \edited{Reaction correction: $\Delta_k\leftarrow\Delta_k+\eta_r(t)\,O_k^\top\Pi_c(\tilde m_k)$.}
        \ENDIF
        \STATE Update $z_{t-\Delta}^k\leftarrow \tilde z_{t-\Delta}^k+\Delta_k$.
    \ENDFOR
    \STATE Re-clamp the start and goal chunks.
\ENDFOR
\STATE \edited{Rank candidates by boundary mismatch, blend shared boundaries, and execute the selected trajectory.}
\end{algorithmic}
\end{algorithm}

\paragraph{DDIM inference loop.}
{Algorithm~\ref{alg:impl_ecd_ddim} gives the inference loop used in the main ECD experiments. The loop follows CD's interleaved chunk sampler. For each diffusion step and chunk, we first run the local CD DDIM proposal and then add the ECD correction. Setting the bridge and reaction scales to zero recovers CD exactly. The bridge-only variant uses only the residual correction, while the approximate variant additionally computes the Markov reaction.} \edited{The time-dependent scales $\eta_b(t)$ and $\eta_r(t)$ absorb both the $\sigma_t^{-2}$ factor from the energy gradient and the DDIM conversion from a noisy-mean correction to an $x_{t-\Delta}$ update. In Algorithm~\ref{alg:impl_ecd_ddim}, $z_t^k$ is the $k$-th local chunk at diffusion step $t$; $P_k$ selects the scored coordinates; $O_k$ selects the boundary-condition variables; $W_k$ is the diagonal residual weight; $\Pi_c$ denotes clipping with threshold $c$; $H_k$ is the approximate Markov Hessian/precision system; and $G_k$ maps boundary variables to the scored coordinates in the local linearization.}

\paragraph{PyTorch pseudo code.}
\edited{The following code snippet summarizes the core ECD implementation. It is simplified for readability. Here \texttt{Pk}, \texttt{Pk\_T}, and \texttt{Ok\_T} denote the selector and scatter maps corresponding to $P_k$, $P_k^\top$, and $O_k^\top$. Please refer to our codebase for full implementation.}

\begin{lstlisting}[language=Python,basicstyle=\ttfamily\footnotesize,breaklines=true]
# Local CD proposal for chunk k.
x_in, tj_cond = build_interleaved_condition(x_list, k, t)
x_prev, x0_pred = diffusion.ddim_p_sample(
    x_in, tj_cond, t_2d, eta=ddim_eta,
    use_clipped_model_output=True, return_x0=True)

# Bridge residual in noisy-mean coordinates.
# Pk selects scored coordinates; Pk_T scatters them back.
mu_t = sqrt_alpha_t * Pk(x0_pred)
r = bridge_weight_mask(k) * (Pk(x_in) - mu_t)

# ECD correction (bridge term). Note: bridge_scale_t absorbs sigma_t^{-2}.
delta = -bridge_scale_t * Pk_T(clip_vec(r, delta_clip))

# ECD correction (approximate reaction term).
if react_scale_t > 0:
    u = solve_block_tridiagonal(H_k, r)
    m_cond = -G_k.transpose(-1, -2) @ u
    delta = delta + react_scale_t * Ok_T(
        clip_vec(m_cond, delta_clip))

x_list[k] = x_prev + delta
\end{lstlisting}

{
\makeblue

\paragraph{Candidate ranking and trajectory blending.}
\label{app:rank_blend}
{CD uses boundary-mismatch ranking and exponential boundary blending. We follow this post-processing for all main experiments. \edited{Let $z^{k,a}$ denote candidate $a$ for local chunk $k$, with $a=1,\ldots,b$ and $k=0,\ldots,K$. Let $\mathsf{R}_o$ select the right boundary segment, i.e., the last $o$ states of a chunk, and let $\mathsf{L}_o$ select the left boundary segment, i.e., the first $o$ states of a chunk. We compute the boundary-mismatch score}}
\begin{equation}
    \edited{d_a=\sum_{k=0}^{K-1}
    \frac{1}{oD}\left\|\mathsf{R}_o z^{k,a}-\mathsf{L}_o z^{k+1,a}\right\|_F^2,
    \qquad a=1,\ldots,b,}
\end{equation}
\edited{where $o$ is the boundary length, $D$ is the planner state dimension, and $\|\cdot\|_F$ is the Frobenius norm over the selected boundary segment. We keep the top $n_{\mathrm{top}}=5$ candidates with the smallest $d_a$. We use only this boundary-mismatch ranking in the main experiments; additional progress- or inverse-dynamics-based rankers are implementation diagnostics and are not used in the reported main configuration.}

{For each selected candidate, non-boundary regions are copied directly. In boundary regions shared by neighboring chunks, we use the same exponential blending rule as CD. \edited{For two adjacent boundary segments $u_{0:o-1}=\mathsf{R}_o z^{k,a}$ and $v_{0:o-1}=\mathsf{L}_o z^{k+1,a}$, indexed by $i=0,\ldots,o-1$, the blended segment is}}
\begin{equation}
    \mathrm{Blend}(u_i,v_i)=\omega_i u_i+(1-\omega_i)v_i,
    \qquad
    \omega_i=\frac{\exp(-\beta i/(o-1))-\exp(-\beta)}{1-\exp(-\beta)},
\end{equation}
{with $\beta=2.0$ by default. Unless otherwise specified, we use CD's \texttt{pick\_type=first}: after top-$k$ ranking and blending, the first ranked trajectory is executed. \edited{In the pseudo-code below, \texttt{trajs[k]} stores the $b$ candidate trajectories for chunk $k$, \texttt{n\_comp}$=K+1$, \texttt{top\_n}$=n_{\mathrm{top}}$, and \texttt{overlap}$=o$.}}

\begin{lstlisting}[language=Python,basicstyle=\ttfamily\footnotesize,breaklines=true]
# CD-style ranking and blending.
scores = []
for k in range(n_comp - 1):
    end_k = trajs[k][:, -overlap:]
    st_next = trajs[k + 1][:, :overlap]
    scores.append(((end_k - st_next) ** 2).mean(axis=(1, 2)))
rank = np.argsort(np.stack(scores, axis=1).sum(axis=1))
top = [traj[rank[:top_n]] for traj in trajs]
plan = exponential_overlap_blend(top, beta=2.0)
plan = plan[0]  # pick_type='first'
\end{lstlisting}

\paragraph{Adaptive replanning.}
\label{app:replan_setting}
{Full rollout evaluation uses the same adaptive replanning rule of CD. \edited{Let $\tau$ be the tracking threshold in Table~\ref{tab:ogbench_config}, $M$ the maximum number of replans, $a_{\mathrm{back}}$ the waypoint-backoff parameter, and $c_{\mathrm{delay}}$ the additional condition delay.} The evaluator triggers a new plan when either (i) the distance between the current observation and the current planned waypoint exceeds $\tau$, or (ii) the agent has consumed sufficiently many waypoints from the current plan. In the latter case, the number of remaining chunks is reduced proportionally to the unused fraction of the previous plan. \edited{We set $M=10$ for environments that use adaptive replanning and use the environment-specific $(a_{\mathrm{back}},c_{\mathrm{delay}})$ constants in Table~\ref{tab:replan_constants}.}}

\begin{table*}[h]
\centering
\footnotesize
\makeblue
\setlength{\tabcolsep}{4pt}
\begin{tabular}{l c c c}
\toprule
\textbf{Environment group} & \textbf{Max replans} $M$ & \textbf{Backoff} $a$ & \textbf{Delay} $c$ \\
\midrule
\texttt{pointmaze-*} & 10 & 10 & 150 \\
\texttt{antmaze-*} and \texttt{antmaze-*-o15d} & 10 & 50 & 150 \\
\texttt{humanoidmaze-*} & 10 & 300 & 150 \\
\texttt{antsoccer-arena} & 10 & 50 & 50 \\
\texttt{antsoccer-medium} & 10 & 0 & 10 \\
\bottomrule
\end{tabular}
\vspace{1em}
\caption{\textbf{Adaptive replanning constants.} {The threshold $\tau$ and rollout horizon $N_{\max}$ are listed in Table~\ref{tab:ogbench_config}. PointMaze Medium/Large are evaluated without adaptive replanning in the CD-aligned setting because their default maximum number of replans is zero.}}
\label{tab:replan_constants}
\end{table*}
}

\subsection{ECD Settings and Hyperparameters}
\label{app:ecd_settings}

\edited{All main ECD runs use the CD-style interleaved chunk sampler and the bridge/boundary-reaction correction described in Algorithm~\ref{alg:impl_ecd_ddim}. The principal ECD hyperparameters are the bridge scale $\eta_b$, reaction scale $\eta_r$, delta clip $c$, Markov coupling $\rho$, and candidate ranker. Based on validation runs, we use boundary-mismatch ranking for the main reported experiments. Detailed settings can be found in our codebase.}

\subsection{Baselines}
\label{app:baselines}

\paragraph{Generative Skill Chaining~\citep{mishra2023generative} and CompDiffuser~\citep{luo2025compdiffuser}.}
{We follow the official CompDiffuser implementation and hyperparameter conventions: \url{https://github.com/devinluo27/comp_diffuser_release}. We use the same planner checkpoints for CD and ECD. PointMaze is executed with the PD controller. AntMaze, HumanoidMaze, AntSoccer, and AntMaze-o15d use inverse-dynamics models trained with the same architecture family described in Table~\ref{tab:invdyn_network}. For HumanoidMaze and AntSoccer, which are not fully covered by the released pretrained checkpoints, we train our own planner and inverse-dynamics models following the CD codebase.}

\paragraph{Compositional Diffusion with Guided Search~\citep{mishra2025compositional}.}
{We use the official CDGS codebase: \url{https://github.com/UtkarshMishra04/CDGS_ogbench}. We use the same checkpoints as CD and ECD. For CDGS, we report the resampling strategy with resampling rounds fixed to $4$ across mazes. We also report the CDGS with the $8$ resampling rounds in our repository which shows diminishing returns over $4$ rounds.
 Since this already substantially exceeds the local forward-evaluation budget of CD and ECD, we do not additionally scale guided-search compute further.}

\paragraph{Goal-conditioned offline RL algorithms.}
{For offline RL baselines, we follow the implementation setup established by OGBench}~\citep{ogbench_park2024}. {These baselines are not diffusion chunk-stitching methods and therefore do not share the Local-NFE accounting above; they are included to contextualize planner success rates in the standard OGBench benchmark.}

\section{Limitations and Future Work}
\label{app:diss}

{This appendix closes the paper by separating implementation assumptions from conceptual limitations. The main method replaces heuristic stitching with an energy-derived correction; the discussion below clarifies when this correction is reliable, where the current approximation may break down, and which extensions are most natural.}

\subsection{Limitations}

\paragraph{Score approximation depends on local Markov structure.}
{The efficient block-tridiagonal solve assumes a local first-order Markov surrogate over states (Eq.~\ref{eq:app_markov_surrogate}). This assumption is reasonable for many navigation tasks, but it can be weak for contact-rich manipulation, delayed dynamics, long-range constraints, or settings where feasibility depends on history. \edited{In such cases, the surrogate can underestimate cross-time coupling, weakening boundary messages and causing slower mode commitment or residual boundary inconsistency.}}

\paragraph{Sensitivity to chunking and overlap design.}
{As in prior compositional diffusion planners~\citep{luo2025compdiffuser, mishra2025compositional}, ECD still requires choosing the number of chunks $K$, chunk horizon $l$, and stride $d$. The overlap length is $o=l-d$. Very large overlaps ($d \ll l$) add redundancy and can make the effective energy landscape poorly conditioned, whereas very small overlaps ($o \ll l$) reduce the amount of information passed between neighboring chunks. We found stable settings across the OGBench tasks, but optimal values can remain domain- and task-dependent.}

\paragraph{Limited evaluation scope.}
{Our experiments focus on OGBench stitching tasks in navigation-like domains. These tasks capture long-horizon composition under multimodality and obstacles, but they do not cover the full range of issues that arise in real-robot deployment, including perception noise, contact dynamics, actuator limits, hardware safety constraints, and model mismatch between offline data and online execution.}

\subsection{Discussion and Future Work}

\paragraph{Integrability versus the true data distribution.}
{ECD guarantees that the composed update field is conservative for the energy model we define, but this should not be interpreted as recovering the exact score of the true long-horizon data distribution. Integrability is necessary for a true score field, not sufficient. Our bridge energy is built from squared deviation to the local denoiser's conditional mean, so the induced density is best viewed as a structured compatibility model. It enforces the consistency properties in \cref{sec:energy_properties}, rather than serving as an exact likelihood for long-horizon trajectories.}

{\edited{A natural extension is to replace the squared-error bridge with learned conditional energies that better approximate short-segment likelihoods given boundary conditions and noise level.} For example, a noise-conditional energy-based model could be trained to distinguish stitchable chunks from incompatible ones.}

\paragraph{Learning better reaction approximations.}
{The current approximation computes boundary messages from a hand-structured Markov surrogate. A direct extension is to learn the surrogate energy or to amortize the Jacobian-vector product that produces the message $\bm{m}_k(t)$, avoiding backpropagation through $f_\theta$ at inference time. Possible designs include neural potentials $E_\phi,E_\psi$ beyond local curvature estimation (Appendix~\ref{app:approx_close_exact}) or a message-passing network supervised by exact reaction terms on training chunks.}

\paragraph{Adaptive chunking for compositional planning.}
{Another direction is to choose $(K,l,d)$ automatically from task geometry, predicted uncertainty, or state-dependent difficulty. For example, the planner could increase overlap or reaction strength in ambiguous regions while keeping computation low in easier regions.}

\paragraph{Real-robot evaluation.}
{Finally, ECD should be evaluated under real-world disturbances such as actuation noise, partial observability, imperfect inverse dynamics, and dataset bias. Coupling the energy correction with online adaptation or uncertainty-aware replanning may improve reliability in deployment.}

\end{document}